\theoremstyle{plain}
\newtheorem{theorem}{Theorem}
\newtheorem{proposition}[theorem]{Proposition}
\newtheorem{identity}[theorem]{Identity}
\theoremstyle{definition}
\theoremstyle{remark}
\newcommand{\E}{\mathbb{E}}
\newcommand{\bas}[1]{\begin{align*}#1\end{align*}}
\newcommand{\ba}[1]{\begin{align}#1\end{align}}
\newcommand{\cdotv}{\boldsymbol{\cdot}}
\newcommand{\distas}[1]{\mathbin{\overset{#1}{\kern\z@\sim}}}%
\newcommand{\cL}{\mathcal{L}}
\newcommand{\cN}{\mathcal{N}}
\newcommand{\beqs}{\vspace{0mm}\begin{eqnarray}}
\newcommand{\eeqs}{\vspace{0mm}\end{eqnarray}}
\newcommand{\barr}{\begin{array}}
\newcommand{\earr}{\end{array}}
\def\gL{{\mathcal{L}}}
\newcommand{\xv}{\boldsymbol{x}}
\newcommand{\zv}{\boldsymbol{z}}
\newcommand{\epsilonv}{\boldsymbol{\epsilon}}
\newcommand{\given}{\,|\,}
\icmltitlerunning{Score identity Distillation}
\begin{document}

\twocolumn[
\icmltitle{
Score identity Distillation: Exponentially Fast Distillation of\\ Pretrained Diffusion Models for One-Step Generation
}




\begin{icmlauthorlist}
\icmlauthor{Mingyuan Zhou}{aaa,comp}
\icmlauthor{Huangjie Zheng}{aaa}
\icmlauthor{Zhendong Wang}{aaa}
\icmlauthor{Mingzhang Yin}{bbb}
\icmlauthor{Hai Huang}{comp}
\end{icmlauthorlist}

\icmlaffiliation{aaa}{The University of Texas at Austin}
\icmlaffiliation{bbb}{The University of Florida}
\icmlaffiliation{comp}{Google}

\icmlcorrespondingauthor{Mingyuan Zhou}{mingyuan.zhou@mccombs.utexas.edu}

\icmlkeywords{Diffusion Distillation, Score Matching, Deep Generative Models}

\vskip 0.3in
]



\printAffiliationsAndNotice{} 

\begin{abstract}

We introduce Score identity Distillation (SiD), an innovative data-free method that distills the generative capabilities of pretrained diffusion models into a single-step generator. SiD not only  facilitates an exponentially fast reduction in Fréchet inception distance (FID) during distillation but also approaches or even exceeds the FID performance of the original teacher diffusion models. By reformulating forward diffusion processes as semi-implicit distributions, we leverage three score-related identities to create an innovative loss mechanism. This mechanism achieves rapid FID reduction by training the generator using its own synthesized images, eliminating the need for real data or reverse-diffusion-based generation, all accomplished within significantly shortened generation time. Upon evaluation across four benchmark datasets, the SiD algorithm demonstrates high iteration efficiency during distillation and surpasses competing distillation approaches, whether they are one-step or few-step, data-free, or dependent on training data, in terms of generation quality. This achievement not only redefines the benchmarks for efficiency and effectiveness in diffusion distillation but also in the broader field of diffusion-based generation. The PyTorch implementation is available at \href{https://github.com/mingyuanzhou/SiD}{https://github.com/mingyuanzhou/SiD}.

\end{abstract}

\begin{figure*}[!t]
\begin{center}
\includegraphics[width=0.15\linewidth]{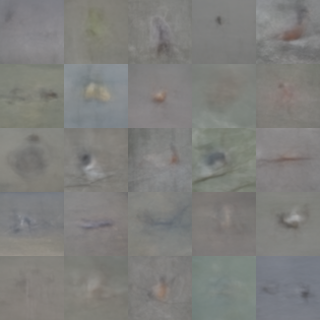}
\includegraphics[width=0.15\linewidth]{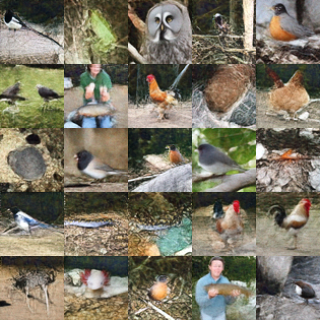}
\includegraphics[width=0.15\linewidth]{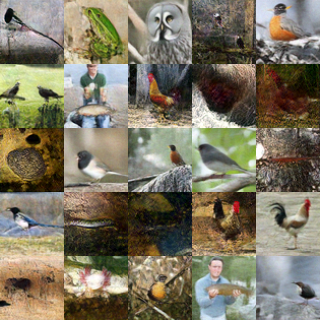}
\includegraphics[width=0.15\linewidth]{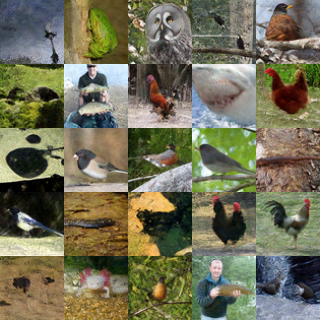}
\includegraphics[width=0.15\linewidth]{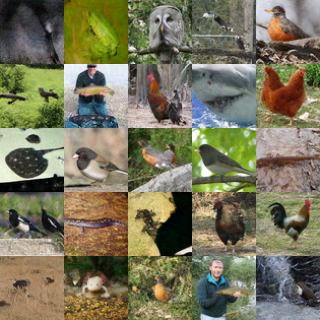}\\
\includegraphics[width=0.15\linewidth]{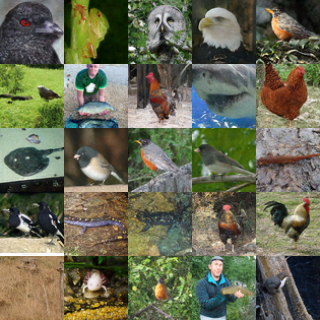}
\includegraphics[width=0.15\linewidth]{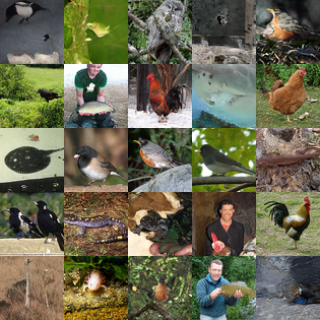}
\includegraphics[width=0.15\linewidth]{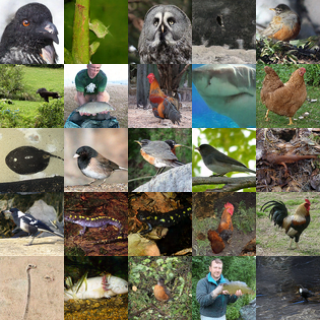}
\includegraphics[width=0.15\linewidth]{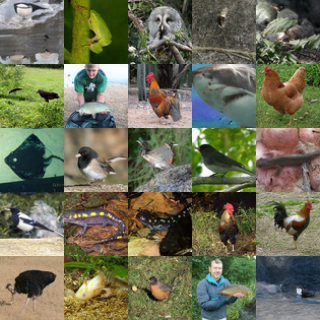}
\includegraphics[width=0.15\linewidth]{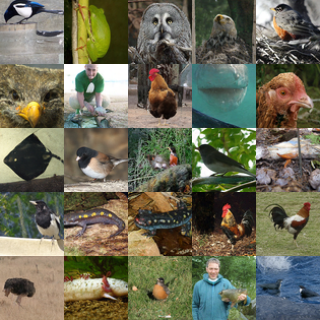}
\end{center}
 \vspace{-4.5mm}
 \caption{\small
Rapid advancements in the distillation of a pretrained ImageNet 64x64 diffusion model are shown using the proposed SiD method, with settings $\alpha=1.0$, a batch size of 1024, and a learning rate of 5e-6. The series of images, generated from the same set of random noises post-training the SiD generator with varying counts of synthesized images, illustrates progressions at 0, 0.1, 0.2, 0.5, 1, 2, 5, 10, 20, and 50 million images. These are equivalent to roughly 0, 100, 200, 500, 1K, 2K, 5K, 10K, 20K, and 49K training iterations respectively, organized from the top left to the bottom right. The associated FIDs for these iterations are 153.52, 34.83, 37.42, 18.08, 10.82, 7.74, 5.94, 4.49, 3.40, and 3.07, in order. The progression of FIDs is detailed in Fig.~\ref{fig:imagenet_1024} in the Appendix.
 }
 \label{fig:imagenet_progress}
 \vspace{-2.5mm}
\end{figure*}

\section{Introduction}

Diffusion models, also known as score-based generative models, have emerged as the leading approach for generating high-dimensional data \citep{sohl2015deep,song2019generative,ddpm}. These models are appreciated for their training simplicity and stability, their robustness against mode collapse during generation, and their ability to produce high-resolution, diverse, and photorealistic images \citep{dhariwal2021diffusion,ho2022cascaded,ramesh2022hierarchical,rombach2022high,saharia2022photorealistic,peebles2023scalable,zheng2023learning}. 

However, the process of generating data with diffusion models involves iterative refinement-based reverse diffusion, necessitating multiple iterations through the same generative network. This multi-step generation process, initially requiring hundreds or even thousands of steps, stands in contrast to the single-step generation capabilities of previous deep generative models such as variational auto encoders (VAEs) \citep{kingma2013auto,rezende2014stochastic} and generative adversarial networks (GANs) \citep{goodfellow2014generative,Karras2019stylegan2}, which only require forwarding the noise through the generation network once. Diffusion models necessitate multi-step generation, making them much more %
expensive at inference time. A wide variety of methods have been introduced to reduce the number of sampling steps during reverse diffusion, but they often still require quite a few number of function evaluations (NFE), such as 35 NFE for CIFAR-10 32x32 and 511 NFE for ImageNet 64x64 in EDM \citep{karras2022elucidating}, to achieve good~performance.

In this study, we aim to introduce a single-step generator designed to distill the knowledge on training data embedded in the score-estimation network of a pretrained diffusion model. To achieve this goal, we propose training the generator by minimizing a model-based score-matching loss between the scores of the diffused real data and the diffused generator-synthesized fake data distributions at various noise levels.
However, estimating this model-based score-matching loss, which is a form of Fisher divergence, at any given noise level proves to be intractable. To overcome this challenge, we offer a fresh perspective by viewing the forward diffusion processes of diffusion models through the lens of semi-implicit distributions. We introduce three corresponding score-related identities and illustrate their integration to formulate an innovative loss mechanism. This mechanism involves both score estimation and Monte Carlo estimation techniques to handle intractable expectations. Our method, designated as Score identity Distillation (SiD), is named to underscore its roots in these three identities.

We validate the effectiveness and efficiency of SiD across all four benchmark datasets considered in \citet{karras2022elucidating}: CIFAR-10 32x32, ImageNet 64x64, FFHQ 64x64, and AFHQv2 64x64. The SiD single-step generator is trained using the VP-EDM checkpoints as the teacher diffusion models. It achieves state-of-the-art performance across all four datasets in providing high-quality generation, measured by Fr\'echet inception distance (FID) \citep{heusel2017gans}, and also facilitates an exponentially fast reduction in FID as the distillation progresses. This is visually corroborated by Figs.~\ref{fig:imagenet_progress} and \ref{fig:cifar_alpha_image} and detailed in the experiments section.

\section{Related Work}

Significant efforts have been directed towards executing the reverse diffusion process in fewer steps. A prominent line of research involves interpreting the diffusion model through the lens of stochastic differential equations (SDE) or ordinary differential equations (ODE), followed by employing advanced numerical solvers for SDE/ODE \citep{ddim,liu2022pseudo,lu2022dpmsolver,zhang2023fast,karras2022elucidating,xue2023sa}. Despite these advancements, there remains a pronounced trade-off between reducing steps and preserving visual quality.
Another line of work considers the diffusion model within the framework of flow matching, applying strategies to simplify the reverse diffusion process into more linear trajectories, thereby facilitating larger step advancements \citep{liu2022flow,lipman2022flow}. %
To achieve generation within fewer steps, researchers also propose to truncate the diffusion chain and starting the generation from an implicit distribution instead of white Gaussian noise \citep{pandey2022diffusevae,zheng2023truncated,lyu2022accelerating} and combining it with GANs for faster generation \citep{xiao2021tackling, wang2023diffusiongan}. 

A unique avenue of research focuses on distilling the reverse diffusion chains \citep{luhman2021knowledge,salimans2022progressive,zheng2023fast,luo2023lcmlora}. \citet{salimans2022progressive} pioneered the concept of progressive distillation, %
which \citet{meng2023distillation} took further into the realm of guided diffusion models equipped with classifier-free guidance.
Subsequent advancements introduced consistency models \citep{song2023consistency} as an innovative strategy for distilling diffusion models, which 
promotes output consistency throughout the ODE trajectory. \citet{song2023improved} further enhanced the generation quality of these consistency models through extensive engineering efforts and new theoretical insights. Pushing the boundaries further, \citet{kim2023consistency} improved prediction consistency at any intermediate stage and incorporated GAN-based loss to elevate image quality. %
Extending
these principles, \citet{Luo2023LatentCM} applied consistency distillation techniques to text-guided latent diffusion models \citep{ramesh2022hierarchical}, facilitating efficient and high-fidelity text-to-image generation.

Recent research has focused on distilling diffusion models into generators capable of one or two step operations through adversarial training~\citep{Sauer2023AdversarialDD}. Following Diffusion-GAN~\citep{wang2023diffusiongan}, which trains a one-step generator by minimizing the Jensen--Shannon divergence (JSD) at each diffusion time step, \citet{Xu2023UFOGenYF} introduced UFOGen, which distills diffusion models using a time-step dependent discriminator, mirroring the initialization of the generator. UFOGen has shown proficiency in one-step text-guided image generation.
Text-to-3D synthesis, using a pretrained 2D text-to-image diffusion model, effectively acts as a distillation process, and leverages the direction indicated by the score function of the 2D diffusion model to guide the generation of various views of 3D objects~\citep{Poole2022DreamFusionTU,wang2023prolificdreamer}.
Building on this concept, Diff-Instruct \citep{luo2023diffinstruct} applies this principle to distill general pretrained diffusion models into a single-step generator, while SwiftBrush \citep{nguyen2023swiftbrush}  further illustrates its effectiveness in distilling pretrained stable diffusion models. Distribution Matching Distillation (DMD) of \citet{yin2023onestep} aligns closely with this principle, and further introduces an additional regression loss term to improve the quality of distillation. 

It is important to note that both Diff-Instruct and DMD are fundamentally aligned with the approach first introduced by Diffusion-GAN \citep{wang2023diffusiongan}. This method entails training the generator by aligning the diffused real and fake distributions. The primary distinction lies in whether the JSD or KL divergence is employed for any given noise level. From this perspective, the proposed SiD method adheres to the framework established by Diffusion-GAN and subsequently embraced by Diff-Instruct and DMD. 
However, SiD distinguishes itself by implementing a model-based score-matching loss, notably a variant of Fisher divergence, moving away from the traditional use of JSD or KL divergence applied to diffused real and fake distributions. Furthermore, it uncovers an effective strategy to approximate this loss, which is analytically intractable. In the sections that follow, we delve into both the distinctive loss mechanism and the method for its approximation, illuminating SiD's innovative~strategy.

\section{Forward Diffusion as Semi-Implicit Distribution: Exploring Score Identities}
The marginal of a mixture distribution can be expressed as 
$
p(\xv) = \int p(\xv\given \zv)p(\zv)\mathrm d\zv.
$
In cases where \( p(\xv\given \zv) \) is analytically defined and \( p(\zv) \) is straightforward to sample from, yet the marginal distribution is intractable or difficult to compute, we follow \citet{yin2018semi} to refer to it as a semi-implicit distribution. 
This semi-implicit framework and its derivatives have been widely used to develop flexible variational distributions with tractable parameter inference, as evidenced by a series of studies \citep{yin2018semi, molchanov2019doubly, hasanzadeh2019semi, titsias2019unbiased, sobolev2019importance, lawson2019energy, moens2021efficient, yu2023semiimplicit,yu2023hierarchical}.

In our study, we explore the vital role of semi-implicit distributions in the forward diffusion process. 
We observe that both the observed real data and the generated fake data adhere to semi-implicit distributions in this process. The gradients of their log-likelihoods, commonly known as scores, can be formulated as the expectation of certain random variables. These expectations are amenable to approximation through deep neural networks or Monte Carlo estimation. This reformulation of the scores is enabled through the application of the semi-implicit framework, allowing for the introduction of three critical identities pertinent to score estimation, as detailed subsequently.

\subsection{Forward Diffusions and Semi-Implicit Distributions}
The forward marginal of a diffusion model is an exemplary illustration of a semi-implicit distribution, expressed as: %
\begin{align}
\textstyle p_{\text{data}}(\xv_t) = \int q(\xv_t \given \xv_0)p_{\text{data}}(\xv_0) \, \mathrm d\xv_0, \label{eq:forward_diffusion}
\end{align}
where the forward conditional \( q(\xv_t \given \xv_0) \) is analytically defined, but the data distribution \( p_{\text{data}}(\xv_0) \) remains unknown and is typically represented through empirical samples.
In this paper, we focus on Gaussian diffusion models, where the forward conditional follows a Gaussian distribution:
\begin{align}
q(\xv_t \given \xv_0) = \mathcal{N}(a_t \xv_0, \sigma_t^2\mathbf{I}),\notag
\end{align}
with \( a_t \in [0,1] \).
To generate a diffused sample from \( \xv_0\sim p_{\text{data}}(\xv_0)\), reparameterization is often employed:
\begin{align}
\xv_t := a_t \xv_0 + \sigma_t\epsilonv_t, \quad \epsilonv_t \sim \mathcal{N}(\mathbf{0},\mathbf{I}).\notag
\end{align}
As \( a_t \) can be assimilated into the preconditioning of neural network inputs without sacrificing generality, we set \( a_t = 1 \) for simplicity, in line with \citet{karras2022elucidating}. 

While the exact form of \( p_{\text{data}}(\xv_t) \) and %
hence the score \( S(\xv_t) := \nabla_{\xv_t} \ln p_{\text{data}}(\xv_t) \) are not known, the score of the forward conditional \( q(\xv_t \given \xv_0) \) %
has an analytic expression as
\begin{align}
\nabla_{\xv_t} \ln q(\xv_t \given \xv_0) =\sigma_t^{-2}
{(\xv_0 - \xv_t)}= -\sigma_t^{-1}{\epsilonv_t}. %
\label{eq:Gaussian_conditional_score}
\end{align}

In this work, we explore an implicit generator \( p_{\theta}(\xv_g) \), parameterized by \(\theta\), which generates random samples as
$
\xv_g = G_{\theta}(\zv), \zv \sim p(\zv),
$
where \( G_{\theta}(\cdot) \) represents a neural network, parameterized by \(\theta\), that deterministically transforms noise \( \zv \sim p(\zv) \) into generated data \( \xv_g \). If the distribution of \( p_{\theta}(\xv_g) \) matches that of \( p_{\text{data}}(\xv_0) \), it then follows that the semi-implicit distribution
\begin{align}\textstyle
p_{\theta}(\xv_t) = \int q(\xv_t \given \xv_g)p_{\theta}(\xv_g) \, \mathrm d\xv_g \label{eq:semi-implicit}
\end{align}
would be identical to \( p_{\text{data}}(\xv_t) \) for any \( t \). Conversely, as proved in \citet{wang2023diffusiongan}, if \( p_{\theta}(\xv_t) \) coincides with \( p_{\text{data}}(\xv_t) \) for any \( t \), this implies a match between the generator distribution \( p_{\theta}(\xv_g) \) and the data distribution \( p_{\text{data}}(\xv_0) \).

\subsection{Score Identities}
In this paper, we illustrate that the semi-implicit distribution defined in \eqref{eq:semi-implicit} is characterized by three crucial identities, each playing a vital role in score-based distillation. The first identity concerns the diffused real data distribution, a well-established concept fundamental to denoising score matching. The second identity is analogous to the first but applies to diffused generator distributions.
The third identity, though not as widely recognized, is essential for the development of our proposed method. 

\begin{identity}[Tweedie's Formula for Diffused Real Data]\label{identity1}
Consider the semi-implicit distribution \( p_{\text{data}}(\xv_t) \) in \eqref{eq:forward_diffusion}, defined by diffusing real data. The expected value of \( \xv_0 \) given \( \xv_t \), in line with \( q(\xv_0 \given \xv_t) =\frac{q(\xv_t \given \xv_0)p_{\emph{\text{data}}}(\xv_0)}{p_{\emph{\text{data}}}(\xv_t)} \) as per Bayes' rule, is connected to the score of \( p_{\emph{\text{data}}}(\xv_t) \) as
\begin{align}
\resizebox{.905\columnwidth}{!}{$\textstyle
\E[\xv_0 \given \xv_t] = \int \xv_0 q(\xv_0 \given \xv_t) \, \mathrm d\xv_0 
= \xv_t + \sigma_t^2 \nabla_{\xv_t} \ln p_{\emph{\text{data}}}(\xv_t).$}
\label{eq:Tweedie_data}
\end{align}
\end{identity}

This identity, known as Tweedie's Formula \citep{robbins1992empirical, efron2011tweedie}, provides an estimate for the original data \( \xv_0 \) given \( \xv_t \), where \( \xv_t \) is generated as \( \xv_t \sim \mathcal{N}(\xv_0, \sigma_t^2 \mathbf{I}), \xv_0 \sim p_{\text{data}}(\xv_0) \). The significance of this relationship has been discussed in \citet{luo2022understanding} and \citet{chung2022improving}. An equivalent identity applies to the diffused fake data distribution.

\begin{identity}[Tweedie's Formula for Diffused Fake Data]\label{identity3}
For the semi-implicit distribution \( p_\theta(\xv_t) \) defined in \eqref{eq:semi-implicit}, resulting from diffusing fake data, the expected value of \( \xv_g \) given \( \xv_t \), following \( q(\xv_g \given \xv_t) =\frac{q(\xv_t \given \xv_g)p_\theta(\xv_g)}{p_\theta(\xv_t)} \) according to Bayes' rule, is associated with the score of \( p_\theta(\xv_t) \) as
\begin{align}
\resizebox{.905\columnwidth}{!}{$\textstyle
\E[\xv_g \given \xv_t] = \int \xv_g q(\xv_g \given \xv_t) \, \mathrm d\xv_g 
= \xv_t + \sigma_t^2 \nabla_{\xv_t} \ln p_\theta(\xv_t).$}
\label{eq:Tweedie_model}
\end{align}

\end{identity}

Transitioning from the initial two identities, we introduce a third identity that is crucial for the proposed computational methodology of score distillation, taking advantage of the properties of semi-implicit distributions.

\begin{identity}[Score Projection Identity]\label{identity:projected_score}
Given the intractability of \( \nabla_{\xv_t}\ln p_\theta(\xv_t) \), we introduce a projection vector to estimate the expected value of its product with the score:
\begin{align*}
&\E_{\xv_t \sim p_{\theta}(\xv_t)}\left[u^T(\xv_t) \nabla_{\xv_t}\ln p_{\theta}(\xv_t)\right]\notag\\ 
&\!\!= \E_{(\xv_t,\xv_g) \sim q(\xv_t \given \xv_g) p_{\theta}(\xv_g)}\left[u^T(\xv_t) \nabla_{\xv_t} \ln q(\xv_t \given \xv_g)\right]. %
\end{align*}\label{projected_score}
\vspace{-8mm}
\end{identity}

This identity was leveraged by \citet{vincent2011connection} to draw a parallel between the explicit score matching (ESM) loss, 
\begin{align}
\resizebox{.905\columnwidth}{!}{$\gL_{\text{ESM}} = \E_{\xv_t\sim p_\text{data}(\xv_t)}\left\|S_{\phi}(\xv_t)-\nabla_{\xv_t} \ln p_\text{data}(\xv_t)\right\|_2^2,$}\label{eq:esm_loss}
\end{align}
and denoising score matching (DSM) loss, given by
\begin{align}
\resizebox{.905\columnwidth}{!}{$\gL_{\text{DSM}} = \E_{q(\xv_t\given \xv_0)p_\text{data}(\xv_0)}\left\|S_{\phi}(\xv_t)-\nabla_{\xv_t} \ln q(\xv_t\given \xv_0)\right\|_2^2.$} \!\!\label{eq:dsm_loss}
\end{align}
Integrating the DSM loss with Unet architectures \citep{ronneberger2015unet} and stochastic-gradient Langevin dynamics \citep{welling2011bayesian} based reverse sampling, \citet{song2019generative} have elevated score-based, or diffusion, models to a prominent position in deep generative modeling.
Additionally, \citet{yu2023semiimplicit} used this identity for semi-implicit variational inference \citep{yin2018semi}, while \citet{yu2023hierarchical} applied it to refine multi-step reverse diffusion.

Distinct from these prior applications of this identity, we integrate it with two previously discussed identities. This fusion, combined with a model-based score-matching loss, culminates in a unique loss mechanism facilitating single-step distillation of a pretrained diffusion model.

\section{SiD: Score identity Distillation}

In this section, we introduce the model-based score-matching loss %
as the theoretical basis for our distillation loss. We then demonstrate how the three identities previously discussed can be fused to approximate this loss.

\subsection{Model-based Explicit Score Matching (MESM) }

Assuming the existence of a diffusion model for the data, with parameter \( \phi \) pretrained to estimate the score \( \nabla_{\xv_t}\ln p_\text{data}(\xv_t) \), we use the following approximation:
\begin{align*}\textstyle 
\nabla_{\xv_t} \ln p_\text{data}(\xv_t) \approx S_{\phi}(\xv_t):= {\sigma_t^{-2}}({f_{\phi}(\xv_t,t)-\xv_t}).
\end{align*}
In other words, we adopt \( f_{\phi}(\xv_t,t)\approx \E[\xv_0\given \xv_t] \) as our approximation, according to \eqref{eq:Tweedie_data} in Identity 1. Our goal is to distill the knowledge encapsulated in \( \phi \), extracting which for data generation typically requires many iterations through the same network $f_{\phi}(\cdotv,\cdotv)$.

We %
use the pretrained score $S_{\phi}(\xv_t)$ to construct our distillation loss. Our aim is to train \( G_{\theta} \) to distill the iterative, multi-step reverse diffusion process into a single network evaluation step. For a specific reverse diffusion time step \( t \sim p(t)\), we define the theoretical distillation loss as
\begin{align}\textstyle 
&\gL_{\theta} = \E_{\xv_t\sim p_{\theta}(\xv_t)}[
\|\delta_{\phi,\theta}(\xv_t)\|_2^2],\label{eq:MESM}\\
&\delta_{\phi,\theta}(\xv_t):= S_{\phi}(\xv_t) - \nabla_{\xv_t}\ln p_{\theta}(\xv_t) .\label{eq:scorediff}
\end{align}
We refer to $\delta_{\phi,\theta}(\xv_t)$ as score difference and
designate its expected L2 norm $\gL_{\theta}$ as the model-based explicit score-matching (MESM) loss, also known in the literature as a Fisher divergence \citep{lyu2009interpretation,holmes2017assigning,yang2019variational,yu2023semiimplicit}. This differs from the ESM loss defined in \eqref{eq:esm_loss} as the expectation is computed with respect to the diffused fake data distribution \( p_{\theta}(\xv_t) \) rather than diffused real data distribution \( p_\text{data}(\xv_t) \).

A common assumption is that the performance of the student model used for distillation will be limited by the outcomes of reverse diffusion using \( S_{\phi}(\xv_t) \), the teacher model. However, our results demonstrate that the student model, utilizing single-step generation, can indeed exceed the performance of the teacher model, EDM of \citet{karras2022elucidating}, which relies on iterative refinement. This indicates that the aforementioned hypothesis might not necessarily hold true. It implies that reverse diffusion could accumulate errors throughout its process, even with very fine-grained reverse steps and the use of advanced numerical solvers designed to counteract error accumulations.~\looseness=-1

\subsection{Loss Approximation based on Identities 1 and 2} \label{sec:4.2}
To estimate the score $\nabla_{\xv_t}\ln p_{\theta}(\xv_t)$, an initial thought would be to adopt a deep neural network-based approximation $f_{\psi}(\xv_t,t)\approx \E[\xv_g\given \xv_t]$ by \eqref{eq:Tweedie_model}, which can be trained with the usual diffusion or denoising score-matching loss as
\ba{
\resizebox{.89\columnwidth}{!}{$
\min_{\psi} %
\E_{ %
q(\xv_t\given \xv_g,t)p_{\theta}(\xv_g)}[{\gamma(t)} %
\|f_{\psi}(\xv_t,t)-\xv_g\|_2^2 ],$}
\label{eq:obj-psi}
}
where the timestep distribution $t\sim p(t)$ %
and 
weighting function $\gamma(t)$ %
can be defined as in
\citet{karras2022elucidating}. Assuming $\xv_t\sim q(\xv_t\given \xv_g),\, \xv_g = G_{\theta}(\zv),\, \zv\sim p(\zv)$, the optimal solution $\psi^*(\theta)$, which depends on the generator distribution determined by $\theta$, satisfies $$f_{\psi^*(\theta)}(\xv_t,t)=\E[\xv_g\given \xv_t]=\xv_t + \sigma_t^2 \nabla_{\xv_t} \ln p_\theta(\xv_t)$$ and we can express the score difference defined in \eqref{eq:scorediff} as \ba{%
\delta_{\phi,\psi^*(\theta)}(\xv_t)=\sigma_t^{-2}(f_{\phi}(\xv_t,t)-
f_{\psi^*(\theta)}(\xv_t,t)).\label{eq:scoredifference}
}
As $\psi^*(\theta)$ depends on $\theta$, 
the minimization of $\mathcal L_{\theta}$ in \eqref{eq:MESM} could potentially be cast as a bilevel optimization problem \citep{ye1997exact,hong2023two,shen2023penalty}.

It is tempting to 
estimate the score difference $\delta_{\phi,\psi^*(\theta)}(\xv_t)$ 
using an approximated score difference defined as
\ba{
{\delta}_{\phi,\psi}(\xv_t) := \sigma_t^{-2}(f_{\phi}(\xv_t,t)-
f_{\psi}(\xv_t,t)),
\label{eq:delta0}
}
which means we approximate $\psi^*(\theta)$ with $\psi$, ignoring its dependence on $\theta$, and define an approximated MESM loss $\gL_{\theta}^{(1)}$
as 
\ba{
\gL_{\theta} \approx \gL_{\theta}^{(1)}:=\E_{\xv_t\sim p_{\theta}(\xv_t)} \left[\|{\delta}_{\phi,\psi}(\xv_t)\|_2^2 \right].\label{eq:L_theta1}
}

However, defining the score approximation error as
\ba{
\triangle_{\psi,\psi^*(\theta)}(\xv_t) := %
\sigma_t^{-2}({f_{\psi}(\xv_t,t)-f_{\psi^*(\theta)}(\xv_t,t)}),
\label{eq:delta1}
}
 we have
$
{\delta}_{\phi,\psi}(\xv_t) %
= {\delta}_{\phi,\psi^*(\theta)}(\xv_t) - \triangle_{\psi,\psi^*(\theta)}(\xv_t)
$
and %
\ba{
\gL_{\theta}^{(1)}
=\gL_{\theta} \,+\,&\E_{p_{\theta}(\xv_t)} \big[ \|\triangle_{\psi,\psi^*(\theta)}(\xv_t)\|_2^2 \notag\\
&-2\triangle_{\psi,\psi^*(\theta)}(\xv_t)^T\delta_{\phi,\psi^*(\theta)}(\xv_t) \big].
\label{eq:L1}
}
Therefore, how well $\gL_{\theta}^{(1)}$ approximates the true loss $\gL_{\theta}$ heavily depends on not only the score approximation error $\triangle_{\psi,\psi^*(\theta)}(\xv_t)$ but also the score difference $\delta_{\phi,\psi^*(\theta)}(\xv_t)$. For a given $\theta$, although one can control $\triangle_{\psi,\psi^*(\theta)}(\xv_t)$ by minimizing \eqref{eq:obj-psi}, it would be difficult to ensure that influence of the score difference $\delta_{\phi,\psi^*(\theta)}(\xv_t)$ %
would not dominate the true loss $\gL_{\theta}$, especially during the intial phase of training when $p_{\theta}(\xv_t)$ does not match $p_\text{data}(\xv_t)$ well.

This concern is confirmed through the distillation of EDM models pretrained on CIFAR-10, employing a loss estimated via reparameterization and Monte Carlo estimation as
\ba{
&\hat{\gL}_{\theta}^{(1)} = \|{\delta}_{\phi,\psi}(\xv_t)\|_2^2,~\\
&\xv_t=\xv_g+\sigma_t\epsilonv_t, \,\,\epsilonv_t %
\sim 
\mathcal{N}(\mathbf{0},\mathbf{I})\\&\xv_g=G_{\theta}(\sigma_{\text{init}}\zv),~ \zv %
\sim 
\mathcal{N}(\mathbf{0},\mathbf{I}).\label{eq:repara}
}
This loss fails to yield meaningful results.
Below, we present a toy example that highlights a failure case when using $\hat{\mathcal{L}}_{\theta}^{(1)}$ as the loss function to optimize $\theta$.

\begin{proposition}[An example failure case]
Suppose $p_\text{data}(x_0) = \mathcal{N}(0,1)$, $p_{data}(x_t)=\mathcal{N}(0,1+\sigma_t^2)$, $q(x_t\given x_g) = \mathcal N(x_g,\sigma_t^2)$, and $p_\theta(x_g) = \mathcal{N}(\theta,1)$. Assume $\psi^*(\theta)=\theta$ and $f_{\psi}(x_t,t) = x_t(1+\sigma_t^2)^{-1} + \psi \sigma_t^2(1+\sigma_t^2)^{-1}$.
Then we have
\begin{enumerate}[label=(\roman*)]
\item $\delta_{\phi,\psi^*(\theta)}(x_t) = - \frac{\theta}{1+\sigma_t^2}$, $\delta_{\phi,\psi}(x_t) = - \frac{\psi}{1+\sigma_t^2}$;
\item $\mathcal{L}_{\theta}=\frac{\theta^2}{(1+\sigma_t^2)^2}$, $\hat{\mathcal{L}}_{\theta}^{(1)} = \frac{\psi^2}{(1+\sigma_t^2)^2} $.
\end{enumerate}
\label{prop:toy}
\end{proposition}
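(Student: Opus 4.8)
This is a direct computation verifying each claimed formula from the explicit Gaussian specifications, so the plan is simply to substitute carefully and simplify. First I would establish the two key ingredients that the statement hands us: the form of the pretrained score and the form of $f_\psi$. Since $p_{\text{data}}(x_t) = \mathcal N(0, 1+\sigma_t^2)$, Tweedie's formula (Identity~1) gives $f_\phi(x_t,t) = \E[x_0\given x_t] = x_t + \sigma_t^2 \nabla_{x_t}\ln p_{\text{data}}(x_t) = x_t - \sigma_t^2 x_t/(1+\sigma_t^2) = x_t/(1+\sigma_t^2)$, i.e.\ $f_\phi(x_t,t) = x_t(1+\sigma_t^2)^{-1}$. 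This is precisely the stated $f_\psi$ formula with $\psi = 0$, which is the sanity check one wants, since at the true data the ``fake'' mean collapses to the data mean.

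\textbf{Part (i).} I would compute $\delta_{\phi,\psi}(x_t) = \sigma_t^{-2}(f_\phi(x_t,t) - f_\psi(x_t,t))$ directly from \eqref{eq:delta0}. Subtracting, $f_\phi - f_\psi = x_t(1+\sigma_t^2)^{-1} - x_t(1+\sigma_t^2)^{-1} - \psi\sigma_t^2(1+\sigma_t^2)^{-1} = -\psi\sigma_t^2(1+\sigma_t^2)^{-1}$, so multiplying by $\sigma_t^{-2}$ yields $\delta_{\phi,\psi}(x_t) = -\psi/(1+\sigma_t^2)$. Then $\delta_{\phi,\psi^*(\theta)}(x_t)$ follows by setting $\psi = \psi^*(\theta) = \theta$, giving $-\theta/(1+\sigma_t^2)$. (One can cross-check the latter against \eqref{eq:scorediff} directly: $p_\theta(x_t) = \mathcal N(\theta, 1+\sigma_t^2)$, so $\nabla_{x_t}\ln p_\theta(x_t) = -(x_t-\theta)/(1+\sigma_t^2)$ and $S_\phi(x_t) = \nabla_{x_t}\ln p_{\text{data}}(x_t) = -x_t/(1+\sigma_t^2)$, whose difference is $-\theta/(1+\sigma_t^2)$ — consistent.)

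\textbf{Part (ii).} For $\mathcal L_\theta$, I would use \eqref{eq:MESM}: $\mathcal L_\theta = \E_{x_t\sim p_\theta(x_t)}\|\delta_{\phi,\psi^*(\theta)}(x_t)\|_2^2$. Since part (i) shows $\delta_{\phi,\psi^*(\theta)}(x_t) = -\theta/(1+\sigma_t^2)$ is constant in $x_t$, the expectation is trivial and $\mathcal L_\theta = \theta^2/(1+\sigma_t^2)^2$. Similarly $\hat{\mathcal L}_\theta^{(1)} = \|\delta_{\phi,\psi}(x_t)\|_2^2 = \psi^2/(1+\sigma_t^2)^2$ by the Monte Carlo definition, again because $\delta_{\phi,\psi}$ is constant in $x_t$ so the single-sample estimate equals its expectation.

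\textbf{Main obstacle.} There is essentially no obstacle — everything is linear-Gaussian and the scores are affine, so no integration is needed beyond noting that the relevant quantities are $x_t$-independent constants. The only points requiring care are bookkeeping: confirming that the given $f_\psi$ is indeed the denoiser minimizing \eqref{eq:obj-psi} for $p_\theta(x_g) = \mathcal N(\psi,1)$ (so that $\psi^*(\theta) = \theta$ is consistent, via the same Tweedie computation applied to $p_\theta(x_t) = \mathcal N(\psi, 1+\sigma_t^2)$), and keeping the factors of $(1+\sigma_t^2)^{-1}$ versus $\sigma_t^2(1+\sigma_t^2)^{-1}$ straight when subtracting. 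The conceptual payoff to flag afterward is that $\hat{\mathcal L}_\theta^{(1)}$ depends only on $\psi$ and not on $\theta$ at all, so its gradient in $\theta$ vanishes and it cannot drive the generator toward $\theta = 0$ — which is exactly the failure mode motivating the refined loss in the sequel.
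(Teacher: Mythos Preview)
Your proposal is correct and follows essentially the same approach as the paper's proof: both compute $f_\phi(x_t,t)=x_t/(1+\sigma_t^2)$ via Tweedie, obtain $\delta_{\phi,\psi}$ either from the denoiser difference \eqref{eq:delta0} or equivalently from the explicit scores (you do both, as does the paper), and then observe that since the score differences are constants in $x_t$ the expectations defining $\mathcal L_\theta$ and $\hat{\mathcal L}_\theta^{(1)}$ are trivial. The only cosmetic difference is ordering---the paper leads with the score formulas and then passes to the $f$'s, whereas you lead with the $f$'s and cross-check via the scores---but the content is identical.
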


The proof %
is presented in \Cref{sec:toy}. The example in this proposition shows that although minimizing the objective $\mathcal{L}_{\theta}$ leads to the optimal generator parameter $\theta^*=0$, the loss $\hat{\mathcal{L}}_{\theta}^{(1)}$ would provide no meaningful gradient towards $\theta^*$. ~\looseness=-1

\subsection{Loss Approximation via Projected Score Matching}
We provide an alternative formulation of the MESM loss: 
\begin{theorem}[Projected Score Matching]\label{thm:project_SM}
The MESM loss in~\eqref{eq:MESM} can be equivalently expressed as
 \ba{
 \resizebox{.98\columnwidth}{!}{$
\gL_{\theta} %
= \E_{q(\xv_t\given \xv_g,t)p_{\theta}(\xv_g)}\left[ %
\sigma_t^{-2}{\delta_{\phi,\psi^*(\theta)}(\xv_t)^T(f_{\phi}(\xv_t,t)-\xv_g)}\right]$}
.
\label{eq:project_SM}
}
\end{theorem}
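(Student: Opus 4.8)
The plan is to start from the definition $\gL_{\theta} = \E_{\xv_t\sim p_{\theta}(\xv_t)}[\|\delta_{\phi,\psi^*(\theta)}(\xv_t)\|_2^2]$ and peel off one copy of the score difference $\delta_{\phi,\psi^*(\theta)}(\xv_t)$, rewriting the remaining factor using the identities already established. Writing $\|\delta_{\phi,\psi^*(\theta)}(\xv_t)\|_2^2 = \delta_{\phi,\psi^*(\theta)}(\xv_t)^T\big(S_{\phi}(\xv_t) - \nabla_{\xv_t}\ln p_{\theta}(\xv_t)\big)$, I would handle the two resulting terms separately. For the $S_{\phi}$ term I substitute $S_{\phi}(\xv_t) = \sigma_t^{-2}(f_{\phi}(\xv_t,t)-\xv_t)$, which is just the definition of the pretrained score. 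For the $\nabla_{\xv_t}\ln p_{\theta}(\xv_t)$ term, I would use Identity~\ref{identity3} (Tweedie for fake data), $\nabla_{\xv_t}\ln p_{\theta}(\xv_t) = \sigma_t^{-2}(\E[\xv_g\given\xv_t]-\xv_t) = \sigma_t^{-2}(f_{\psi^*(\theta)}(\xv_t,t)-\xv_t)$, so that the difference of the two terms telescopes: $S_{\phi}(\xv_t) - \nabla_{\xv_t}\ln p_{\theta}(\xv_t) = \sigma_t^{-2}(f_{\phi}(\xv_t,t)-f_{\psi^*(\theta)}(\xv_t,t)) = \delta_{\phi,\psi^*(\theta)}(\xv_t)$ — but that is circular, so this naive route only recovers $\|\delta\|^2$ again.

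The key move instead is to keep one factor as the abstract score difference $\delta_{\phi,\psi^*(\theta)}(\xv_t)$ — playing the role of the projection vector $u(\xv_t)$ — and expand only the \emph{other} factor $\nabla_{\xv_t}\ln p_{\theta}(\xv_t)$ via Identity~\ref{identity:projected_score} (Score Projection). Concretely, set $u(\xv_t) = \delta_{\phi,\psi^*(\theta)}(\xv_t)$; then
\[
\E_{\xv_t\sim p_{\theta}(\xv_t)}\!\big[\delta_{\phi,\psi^*(\theta)}(\xv_t)^T\nabla_{\xv_t}\ln p_{\theta}(\xv_t)\big]
= \E_{q(\xv_t\given\xv_g,t)p_{\theta}(\xv_g)}\!\big[\delta_{\phi,\psi^*(\theta)}(\xv_t)^T\nabla_{\xv_t}\ln q(\xv_t\given\xv_g)\big].
\]
Now I substitute the analytic Gaussian conditional score $\nabla_{\xv_t}\ln q(\xv_t\given\xv_g) = \sigma_t^{-2}(\xv_g - \xv_t)$ from \eqref{eq:Gaussian_conditional_score} on the right-hand side, and on the $S_{\phi}$ term I substitute $S_{\phi}(\xv_t) = \sigma_t^{-2}(f_{\phi}(\xv_t,t)-\xv_t)$. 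Subtracting, the two $-\sigma_t^{-2}\xv_t$ pieces cancel inside the inner product, leaving $\gL_{\theta} = \E_{q(\xv_t\given\xv_g,t)p_{\theta}(\xv_g)}\big[\sigma_t^{-2}\delta_{\phi,\psi^*(\theta)}(\xv_t)^T(f_{\phi}(\xv_t,t)-\xv_g)\big]$, which is exactly \eqref{eq:project_SM}.

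The one subtlety — and the step I expect to need the most care — is the application of the Score Projection Identity with a projection vector $u(\xv_t) = \delta_{\phi,\psi^*(\theta)}(\xv_t)$ that itself depends on $\theta$ through $\psi^*(\theta)$: I should check that Identity~\ref{identity:projected_score} is stated for an arbitrary (sufficiently regular, $\theta$-independent in its derivation) vector field $u$, and that treating $\delta_{\phi,\psi^*(\theta)}$ as a fixed vector field at the point where the identity is invoked is legitimate (it is, since the identity is an integration-by-parts/Bayes-rule statement about $p_{\theta}(\xv_t)$ for fixed $\theta$, and no differentiation in $\theta$ is involved). A minor bookkeeping point is the handling of the time variable: the outer loss is defined at a fixed $t\sim p(t)$, and I should make sure the change from $\E_{\xv_t\sim p_{\theta}(\xv_t)}$ to $\E_{q(\xv_t\given\xv_g,t)p_{\theta}(\xv_g)}$ is done at fixed $t$ (with the $p(t)$ average applied uniformly on both sides if one wants the marginalized version). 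Everything else is substitution of the closed-form Gaussian scores and cancellation.
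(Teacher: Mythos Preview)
Your proposal is correct and follows essentially the same route as the paper: keep one factor of $\delta_{\phi,\psi^*(\theta)}(\xv_t)$ as the projection vector $u(\xv_t)$, expand the other factor as $S_{\phi}(\xv_t)-\nabla_{\xv_t}\ln p_{\theta}(\xv_t)$, apply Identity~\ref{identity:projected_score} to the $\nabla_{\xv_t}\ln p_{\theta}(\xv_t)$ piece, substitute the Gaussian conditional score $\sigma_t^{-2}(\xv_g-\xv_t)$ and $S_{\phi}(\xv_t)=\sigma_t^{-2}(f_{\phi}(\xv_t,t)-\xv_t)$, and cancel the $\xv_t$ terms. The paper's version differs only cosmetically in that it first rewrites the retained $\delta$ factor as $\sigma_t^{-2}(\E[\xv_0\given\xv_t]-\E[\xv_g\given\xv_t])$ via Tweedie before invoking the projection identity, and writes $f_{\phi}(\xv_t,t)$ as $\E[\xv_0\given\xv_t]$ in the final expression.
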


The proof, based on Identity~\ref{identity:projected_score}, is deferred to \Cref{sec:proof}.
We approximate the loss by substituting $\psi^*(\theta)$ in \eqref{eq:project_SM} with its approximation $\psi$, %
leading to an approximated loss $\gL_{\theta}^{(2)}$~as
\ba{
\gL_{\theta}^{(2)} &
\resizebox{.905\columnwidth}{!}{$
=\E_{q(\xv_t\given \xv_g,t)p_{\theta}(\xv_g)}\left[ %
\sigma_t^{-2}{\delta_{\phi,\psi}(\xv_t)^T(f_{\phi}(\xv_t,t)-\xv_g)}\right]$}\notag\\
&\!\!= \gL_{\theta}-\,\E_{q(\xv_t\given \xv_g,t)p_{\theta}(\xv_g)}\notag\\
&~~~~~~~~~~\left[ %
\sigma_t^{-2}{\triangle_{\psi,\psi^*(\theta)}(\xv_t)^T(f_{\phi}(\xv_t,t)-\xv_g)}\right].\label{eq:L2}
}
Comparing \eqref{eq:L2} to \eqref{eq:L1} indicates that $\gL_{\theta}^{(2)}$ is directly influenced by neither the norm $\|\triangle_{\psi,\psi^*(\theta)}(\xv_t)\|_2^2$ nor the score difference $\delta_{\phi,\psi^*(\theta)}(\xv_t)$ given by \eqref{eq:scoredifference}. Initially in training, the discrepancy between the estimated and actual scores for the generator distribution may amplify the value of $\triangle_{\psi,\psi^*(\theta)}(\xv_t)$, whereas the difference between the pre-trained score for the real data distribution and the actual score for the generator distribution may inflate $\delta_{\phi,\psi^*(\theta)}(\xv_t)$. By contrast, the term $f_{\phi}(\xv_t,t)-\xv_g$ within \eqref{eq:L2} reflects the efficacy of the pre-trained model in denoising corrupted fake data, which tends to be more stable.

Let's verify the failure case for $\gL_{\theta}^{(1)}$ and see whether it is still the case for $\gL_{\theta}^{(2)}$.

\begin{proposition}
 Under the setting of \Cref{prop:toy}, the gradient of loss $\gL_{\theta}^{(2)}$ can be estimated as
\bas{
\nabla_\theta \hat{L}_{\theta}^{(2)} = - ({1+\sigma_t^2})^{-1}{\delta}_{\phi,\psi}(\xv_t) \nabla_{\theta} G_{\theta}(\sigma_{\text{init}}\zv),
}
 which involves the product of the approximated score difference ${\delta}_{\phi,\psi}(\xv_t)=- \frac{\psi}{1+\sigma_t^2}$ and the gradient of the generator.
\label{prop:toy2}
\end{proposition}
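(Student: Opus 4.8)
The plan is to specialize the general formula for $\gL_\theta^{(2)}$ in~\eqref{eq:L2} (equivalently, its Monte Carlo / reparameterized version obtained from~\eqref{eq:project_SM}) to the Gaussian toy model of \Cref{prop:toy}, and then differentiate with respect to $\theta$. Concretely, I would start from the single-sample estimator
\ba{
\hat{L}_\theta^{(2)} = \sigma_t^{-2}\,{\delta}_{\phi,\psi}(x_t)\,\bigl(f_\phi(x_t,t)-x_g\bigr),\qquad x_t = x_g + \sigma_t\epsilon_t,\quad x_g = G_\theta(\sigma_{\text{init}} z),\notag
}
which is the scalar analogue of the reparameterized loss underlying~\eqref{eq:repara}. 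Here the only explicit $\theta$-dependence that is kept is through $x_g$ (and hence $x_t$); as in the construction of $\gL_\theta^{(2)}$, the dependence of $\psi$ on $\theta$ is deliberately ignored, so ${\delta}_{\phi,\psi}(x_t)$ is treated as if $\psi$ were a constant.

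Next I would substitute the closed forms supplied by \Cref{prop:toy}: from part (i), ${\delta}_{\phi,\psi}(x_t) = -\psi/(1+\sigma_t^2)$, and by the Tweedie identity~\eqref{eq:Tweedie_data} together with $p_{\text{data}}(x_t)=\mathcal N(0,1+\sigma_t^2)$ we have $f_\phi(x_t,t) = \E[x_0\mid x_t] = x_t/(1+\sigma_t^2)$, so that $f_\phi(x_t,t)-x_g = x_t/(1+\sigma_t^2) - x_g$. Then I would apply the chain rule: $\nabla_\theta \hat{L}_\theta^{(2)} = \partial_{x_g}\hat{L}_\theta^{(2)}\cdot \nabla_\theta x_g$, with $\nabla_\theta x_g = \nabla_\theta G_\theta(\sigma_{\text{init}} z)$ and $\partial x_t/\partial x_g = 1$. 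Differentiating the product $\bigl(-\psi/(1+\sigma_t^2)\bigr)\cdot\sigma_t^{-2}\bigl(x_t/(1+\sigma_t^2)-x_g\bigr)$ in $x_g$, the derivative of the bracket is $\sigma_t^{-2}\bigl(1/(1+\sigma_t^2)-1\bigr) = \sigma_t^{-2}\cdot\bigl(-\sigma_t^2/(1+\sigma_t^2)\bigr) = -1/(1+\sigma_t^2)$, and multiplying by the prefactor $-\psi/(1+\sigma_t^2) = {\delta}_{\phi,\psi}(x_t)$ gives exactly $-(1+\sigma_t^2)^{-1}{\delta}_{\phi,\psi}(x_t)$, whence $\nabla_\theta \hat{L}_\theta^{(2)} = -(1+\sigma_t^2)^{-1}{\delta}_{\phi,\psi}(x_t)\,\nabla_\theta G_\theta(\sigma_{\text{init}}z)$, as claimed.

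The only genuinely delicate point — and the step I would treat as the main obstacle — is bookkeeping which quantities carry $\theta$-dependence when we differentiate: ${\delta}_{\phi,\psi}$ must be frozen (consistent with how $\gL_\theta^{(2)}$ is defined by plugging in $\psi$ for $\psi^*(\theta)$), while $x_t$ inside $f_\phi(x_t,t)$ must be differentiated through $x_g$. It is worth double-checking that the $\sigma_t^{-2}$ from the loss and the $\sigma_t^2$ hidden in $f_\phi(x_t,t)-x_g = -\sigma_t^2 x_g/(1+\sigma_t^2) + \sigma_t\epsilon_t/(1+\sigma_t^2)$ combine to leave the clean factor $(1+\sigma_t^2)^{-1}$; the $\epsilon_t$ term is independent of $x_g$ and drops out under differentiation. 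Finally I would remark, to close the narrative of the section, that since ${\delta}_{\phi,\psi}(x_t) = -\psi/(1+\sigma_t^2)$ does depend on $\psi$ — and $\psi$ tracks $\psi^*(\theta)=\theta$ once the inner problem~\eqref{eq:obj-psi} is (approximately) solved — this gradient does point toward $\theta^*=0$, in contrast to the vacuous $\nabla_\theta\hat{\gL}_\theta^{(1)}=0$ exhibited in \Cref{prop:toy}.
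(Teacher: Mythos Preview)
Your proposal is correct and matches the paper's approach: both plug the closed forform expressions from the toy setup into the single-sample estimator $\hat L_\theta^{(2)}$ and differentiate in $x_g$ via the chain rule, arriving at $\partial_{x_g}\hat L_\theta^{(2)}=\psi/(1+\sigma_t^2)^2=-(1+\sigma_t^2)^{-1}\delta_{\phi,\psi}(x_t)$. The paper merely organizes the algebra slightly differently---it first rewrites $\hat L_\theta^{(2)}=\hat L_\theta^{(1)}+\sigma_t^{-2}\delta_{\phi,\psi}(x_t)\bigl(f_\psi(x_t,t)-x_g\bigr)$ and simplifies to $\hat L_\theta^{(2)}=\frac{\psi}{(1+\sigma_t^2)^2}\bigl[x_g-\epsilon_t/\sigma_t\bigr]$ before taking the derivative---but the computation is the same.
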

We note the product of the approximated score difference and $\nabla_{\theta} G_{\theta}(\sigma_{\text{init}}\zv)$ is used to construct the loss for Diff-Instruct \citep{luo2023diffinstruct}, which has been shown to be able to distill a pretrained diffusion model with satisfactory performance. Thus for the toy example where $\gL_{\theta}^{(1)}$ fails, using $\gL_{\theta}^{(2)}$ can provide useful gradient to guide the generator.

\subsection{Fused Loss of SiD}

Examining $\gL_{\theta}^{(2)}$ and $\gL_{\theta}^{(1)}$ unveils their interconnections: 
\ba{ 
\gL_{\theta}^{(2)} 
=~&\gL_{\theta}^{(1)}+\E_{\xv_g\sim p_{\theta}(\xv_g)}\E_{ \xv_t\sim q(\xv_t\given \xv_g,t)}\notag\\
&\left[ 
{\sigma_t^{-2}}{{\delta}_{\phi,\psi}(\xv_t)^T(f_{\psi}(\xv_t,t)-\xv_g)}
\right]. %
}
Empirically, while $\hat{\gL}_{\theta}^{(1)}$ fails, our distillation experiments on CIFAR-10 reveal that $\hat{\gL}_{\theta}^{(2)}$ performs well in terms of Inception Score (IS), but yields poor FID.
This outcome is illustrated in the visualizations for 
$\alpha=0$ in Figs. \ref{fig:cifar_alpha_image} and \ref{fig:cifar_ablate}.

 Visual inspection 
indicates that the generated images are darker in comparison to the training images. 
Given that $\hat{\gL}_{\theta}^{(1)}$ fails while $\hat{\gL}_{\theta}^{(2)}$ shows promis, albeit with poor FID due to mismatched color, we hypothesize that the difference term $$\hat{\gL}_{\theta}^{\triangle} = \hat{\gL}_{\theta}^{(2)}-\hat{\gL}_{\theta}^{(1)} = {\sigma_t^{-2}}{\delta}_{\phi,\psi}(\xv_t)^T(f_{\psi}(\xv_t,t)-\xv_g)$$ directs the gradient towards the desired direction. 

Thus we are propelled to consider the loss
\ba{
\gL_{\theta}^{(2)}-\alpha \gL_{\theta}^{(1)} = (1-\alpha)\gL_{\theta}^{(1)}+\gL_{\theta}^{\triangle}.\label{eq:loss_combine}
}

We empirically find that setting \(\alpha\in[-0.25, 1.2]\) produces visually coherent images, with \(\alpha\in[0.75, 1.2]\) typically leading to superior results, as shown in Figs.\,\ref{fig:cifar_alpha_image} and 
\ref{fig:cifar_ablate}.

In summary, the weighted loss is expressed as
\ba{
&\textstyle \tilde{L}_{\theta}(\xv_t,t,\phi,\psi)
=(1-\alpha) \frac{\omega(t)}{{\sigma_t^4} } \|f_{\phi}(\xv_t,t)-f_{\psi}(\xv_t,t)\|_2^2\notag\\
&\textstyle+\frac{\omega(t)}{{\sigma_t^4} } (f_{\phi}(\xv_t,t)-f_{\psi}(\xv_t,t))^T(f_{\psi}(\xv_t,t)-\xv_g),
\label{eq:obj-theta}
}
where $\xv_t$ is generated as in \eqref{eq:repara} and $\omega(t)$ are weighted coefficients that need to be specified.
To compute the gradient of the above equation, 
SiD backpropagates the gradient through both $\phi$ and $\psi$ by calculating 
two score gradients ($i.e.$, gradients of scores) as 
\begin{equation}
\begin{aligned}
&\textstyle\nabla_{\theta}f_{\phi}(\xv_t,t)=\frac{\partial f_{\phi}(\xv_t,t)}{\partial \xv_t}\nabla_{\theta} G_{\theta}(\sigma_{\text{init}}\zv)\\
&\textstyle\nabla_{\theta}f_{\psi}(\xv_t,t)=\frac{\partial f_{\psi}(\xv_t,t)}{\partial \xv_t}\nabla_{\theta} G_{\theta}(\sigma_{\text{init}}\zv).
\end{aligned}\label{eq:scoregrad}
\end{equation}
This feature distinguishes SiD from Diff-Instruct and DMD that do not use score gradients $\frac{\partial f_{\phi}(\mathbf{x}_t,t)}{\partial \xv_t}$ and $\frac{\partial f_{\psi}(\xv_t,t)}{\partial \xv_t}$.

\subsection{Noise Weighting and Scheduling}
The proposed SiD algorithm iteratively updates the score estimation parameters \(\psi\), given \(\theta\), following %
\eqref{eq:obj-psi}, and updates the generator parameters \(\theta\), given \(\psi\), as per %
\eqref{eq:obj-theta}. This alternating update scheme aligns with related approaches \citep{wang2023diffusiongan,luo2023diffinstruct,yin2023onestep}.
Consequently, we largely adopt the methodology outlined by \citet{luo2023diffinstruct} and \citet{yin2023onestep} for setting model parameters, including weighting coefficients \(\omega(t)\) and the distribution of \(t\sim p(t)\). Specifically, denoting %
\(C\) as the total pixel count of an image and \(\|\cdot\|_{1,sg}\) as the L1 norm combined with the stop gradient operation, %
we define
\ba{
\omega(t) = C\sigma_t^4 /\|\xv_g-f_{\phi}(\xv_t,t)\|_{1,sg}~~.\label{eq:weight}
}
Choosing $\sigma_{\min} = 0.002$, $\sigma_{\max}=80$, $\rho=7.0$, and $t_{\max}\in[0,1000]$, we sample $t\sim \mbox{Unif}[0,t_{\max}/1000]$ %
 and
 define the  noise levels as
\ba{
\sigma_t = \big(\sigma_{\max}^{\frac{1}{\rho}}+\textstyle(1-t)(\sigma_{\min}^{\frac{1}{\rho}}-\sigma_{\max}^{\frac{1}{\rho}})\big)^{\rho}.\label{eq:sigma_t}
}

The distillation process is outlined in \Cref{alg:sid}. The one-step generation procedure is straightforward:
$\xv= G_\theta(\sigma_{\text{init}}\zv),~ \zv \sim \cN(\mathbf{0},  \mathbf{I}),$
where \(\sigma_{\text{init}}\), by default set to 2.5, remains consistent throughout  distillation and generation.

\begin{figure}[!th]
\centering
\begin{minipage}{\linewidth} %

\captionof{table}{\small Comparison of various deep generative models trained on {CIFAR-10} without label conditioning. The best and second-best one/few-step generators under the FID or IS metric are highlighted with \textbf{bold} and \textit{\textbf{italic bold}}, respectively.\label{tab:cifar10_uncond}}
\begin{adjustbox}{width=1\linewidth,
center}
\begin{tabular}{clccc}
 \toprule[1.5pt]
Family & Model &NFE & FID~($\downarrow$) & IS~($\uparrow$)\\ %
 \midrule
 Teacher & VP-EDM~\citep{karras2022elucidating} & 35 &{1.97} & 9.68\\ %
 \midrule
 \multirow{6}{*}{Diffusion} & DDPM~\citep{ddpm} & 1000 & 3.17 & 9.46\scriptsize{$\pm$0.11} \\ %
 & DDIM~\citep{ddim} & 100 & 4.16 & \\
 & DPM-Solver-3~\citep{lu2022dpmsolver} & 48 &  2.65& \\
 &VDM~\citep{kingma2021variational} & 1000 & 4.00 &  \\
 &iDDPM~\citep{nichol2021improved} & 4000 & 2.90 &  \\
 &HSIVI-SM~\citep{yu2023hierarchical} & 15 &4.17 &\\
  &TDPM~\citep{zheng2023truncated} & 5 & 3.34 & \\
 &TDPM+~\citep{zheng2023truncated} & 100 & 2.83 & 9.34\\
 &VP-EDM+LEGO-PR~\citep{zheng2023learning} &35&1.88&9.84 \\
 \midrule
\multirow{25}{*}{One Step}
& NVAE~\citep{vahdat2020nvae} & 1 & 23.5 & \\
&StyleGAN2+ADA~\citep{Karras2019stylegan2} &1&5.33\scriptsize{$\pm$0.35}& \textbf{\textit{10.02}}\scriptsize{$\pm$0.07}\\
&StyleGAN2+ADA+Tune~\citep{Karras2019stylegan2} &1&2.92\scriptsize{$\pm$0.05}& 9.83\scriptsize{$\pm$0.04}\\
&CT-StyleGAN2~\citep{zheng2021exploiting} &1&2.9$\pm$\scriptsize{0.4}& \textbf{10.1}$\pm$\scriptsize{0.1}\\
&StyleGAN2+ DiffAug~\citep{zhao2020differentiable} & 1 & 5.79 &\\ %
& ProjectedGAN~\citep{sauer2021projected} & 1 & 3.10 & \\
 & DiffusionGAN~\citep{wang2023diffusiongan} & 1 & 3.19 &\\ %
 &Diffusion ProjectedGAN~\citep{wang2023diffusiongan} & 1 & 2.54 & \\
 & KD~\citep{luhman2021knowledge} &1& 9.36 & \\
 & TDPM ~\citep{zheng2023truncated} & 1 &7.34& \\
 & PD~\citep{salimans2022progressive} & 1 & 8.34 & 8.69\\
 &Score Mismatching~\citep{ye2023score} & 1 & 8.10 &\\
 & 2-ReFlow~\citep{liu2022flow} & 1 & 4.85 & 9.01\\
 & DFNO~\citep{zheng2023fast} & 1 & 3.78 & \\
 & CD-LPIPS \citep{song2023consistency} & 1 & 3.55 & 9.48 \\
 & iCT~\citep{song2023improved} & 1 & 2.83 & 9.54 \\
 & iCT-deep~\citep{song2023improved} & 1 & 2.51 & 9.76 \\ 
&G-distill~\citep{meng2023distillation} ($w$=0.3) & 1 & 7.34 & 8.9\\
&GET-Base~\citep{geng2024one}  & 1 & 6.91 & 9.16\\
&Diff-Instruct~\citep{luo2023diffinstruct} & 1 & 4.53 & 9.89\\
&StyleGAN2+ADA+Tune+DI~\citep{luo2023diffinstruct} &1&2.71 & 9.86\scriptsize{$\pm$0.04}\\
&PID~\citep{tee2024physics}  & 1 & 3.92 &9.13 \\
&TRACT~\citep{berthelot2023tract} & 1 & 3.78 & \\
& DMD~\citep{yin2023onestep} & 1 & 3.77 & \\
&CTM~\citep{kim2023consistency}&1&\textbf{\textit{1.98}}&\\
&SiD (ours), $\alpha=1.0$ & 1 & 2.028 $\pm$\scriptsize{0.020}  & \textbf{\textit{10.017}}$\pm$\scriptsize{0.047}\\ 
&SiD (ours), $\alpha=1.2$ & 1 & \textbf{1.923}$\pm$\scriptsize{0.017} & {9.980}$\pm$ \scriptsize{0.042}\\ 
 \bottomrule[1.5pt]
\end{tabular}
\end{adjustbox}%
\captionsetup[subfloat]{} 
 \centering
 {\includegraphics[width=.75\linewidth]{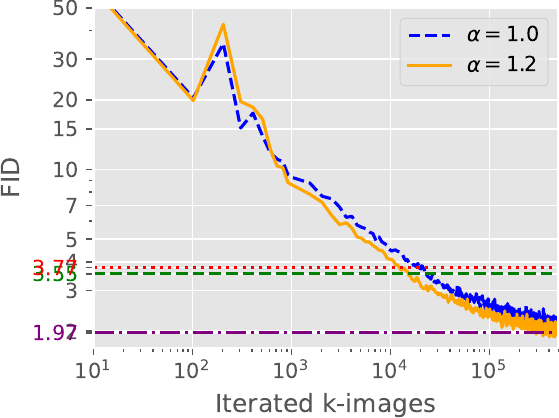}}
 \vspace{-3mm}
\captionof{figure}{\small 
Evolution of FIDs for the SiD generator during the distillation of the EDM teacher model pretrained on CIFAR-10 (unconditional), using $\alpha=1.0$ or $\alpha=1.2$ and a batch size of 256.
 The performance of EDM, along with  DMD and Diff-Instruct, is depicted with horizontal lines in purple, green, and red, respectively. %
 }
 \label{fig:convergence_speed}
\vspace{-4.5mm}
\end{minipage}
\end{figure}

\section{Experimental Results}
Initially, we demonstrate the capability of the Score identity Distillation (SiD)  generator to rapidly train and generate photo-realistic images by leveraging the pretrained score network and its own synthesized fake images. Subsequently, we conduct an ablation study to investigate the impact of the parameter $\alpha$ and discuss the settings of several other parameters. Through extensive experimentation, we assess both the effectiveness and efficiency of SiD in the context of diffusion-based image generation.

\begin{figure}[t]
\centering
\begin{minipage}{\linewidth} 
\captionof{table}{\small 
Analogous to Table \ref{tab:cifar10_uncond} for 
{CIFAR-10} (conditional).\label{tab:cifar10_cond}}
\begin{adjustbox}{width=1\linewidth,
center}
\begin{tabular}{clcc}
 \toprule[1.5pt]
Family & Model &NFE & FID~($\downarrow$)\\ %
 \midrule
 \multirow{1}{*}{Teacher} %
 & VP-EDM~\citep{karras2022elucidating} & 35& {1.79}\\ 
 \midrule
 \multirow{3}{0.2\linewidth}{Direct generation}  %
 & BigGAN~\citep{brock2018large} & 1 & 14.73\\ %
 &StyleGAN2+ADA~\citep{Karras2019stylegan2} &1&3.49\scriptsize{$\pm$0.17}\\
 &StyleGAN2+ADA+Tune~\citep{Karras2019stylegan2} &1&2.42\scriptsize{$\pm$0.04}\\
 \midrule
\multirow{8}{*}{Distillation}
 &GET-Base~\citep{geng2024one}  & 1 & 6.25 \\
&Diff-Instruct~\citep{luo2023diffinstruct}&1 & 4.19 \\
&StyleGAN2+ADA+Tune+DI~\citep{luo2023diffinstruct} &1&2.27\\
& DMD~\citep{yin2023onestep} &1 & 2.66 \\
& DMD (\textit{w.o.} KL)~\citep{yin2023onestep} &1 & 3.82 \\
& DMD (\textit{w.o.} \textit{reg}.)~\citep{yin2023onestep} &1 & 5.58 \\
&CTM~\citep{kim2023consistency}&1&\textbf{\textit{1.73}}\\
&SiD (ours), $\alpha=1.0$&1 & 1.932$\pm$\scriptsize{0.019} 
\\ 
&SiD (ours), $\alpha=1.2$ &1 & \textbf{{1.710}}$\pm$\scriptsize{0.011} \\ 
 \bottomrule[1.5pt]
\end{tabular}
\end{adjustbox}%
\captionsetup[subfloat]{} 
 \centering
 {\includegraphics[width=.7\linewidth]{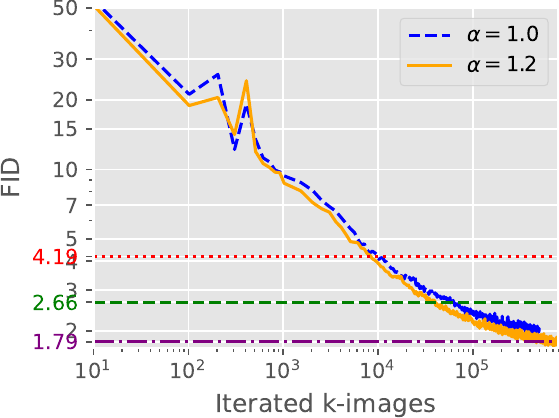}\label{fig:first_img}}
 \vspace{-3mm}
\captionof{figure}{\small Analogous to Fig.~\ref{fig:convergence_speed} for CIFAR-10 (conditional).
 }
 \label{fig:cifar_cond_convergence_speed}
\vspace{-3mm}
\end{minipage}
\end{figure}

\textbf{Datasets. } To thoroughly assess the effectiveness of SiD, we utilize four representative datasets considered in \citet{karras2022elucidating}, including CIFAR-10 $32 \times 32$ (cond/uncond) \citep{cifar10}, ImageNet $64 \times 64$ \citep{deng2009imagenet}, FFHQ $64 \times 64$ \citep{karras2019style}, and AFHQ-v2 $64 \times 64$ \citep{choi2020stargan}. 

\textbf{Evaluation protocol. } We measure image generation quality using FID and Inception Score (IS;~\citet{salimans2016improved}). Following \citet{karras2019style, karras2022elucidating}, we measure FIDs using 50k generated samples, with the training set used by the EDM teacher model\footnote{\url{https://github.com/NVlabs/edm}} as reference. We also consider Precision and Recall \citep{kynkaanniemi2019improved} when evaluating SiD on ImageNet 64x64, where we use a predefined reference batch\footnote{\url{https://openaipublic.blob.core.windows.net/diffusion/jul-2021/ref_batches/imagenet/64/VIRTUAL_imagenet64_labeled.npz}} to compute both metrics\footnote{\url{https://github.com/openai/guided-diffusion/tree/main/evaluations}} \citep{dhariwal2021diffusion,nichol2021improved,song2023consistency,song2023improved}.

\textbf{Implementation details. } 
We implement SiD based on the EDM \citep{karras2022elucidating} code base and we initialize both the generator $G_{\theta}$ and its score estimation network $f_{\psi}$ by copying the architecture and parameters of the pretrained score network $f_{\phi}$ from EDM \citep{karras2022elucidating}. 
We provide the other implementation details in \Cref{sec:detail}.

\begin{figure}[!th]
\centering
\begin{minipage}{\linewidth} 
\captionof{table}{\small %
Analogous to Table \ref{tab:cifar10_uncond} for ImageNet 64x64 with label conditioning. The Precision and Recall metrics are also included. \label{tab:imagenet}}
\begin{adjustbox}{width=1\linewidth,
center}
\begin{tabular}{clccccc}
 \toprule[1.5pt]
Family & Model &NFE & FID\,($\downarrow$) &Prec.\,($\uparrow$) & Rec.\,($\uparrow$)\\ %
\midrule
 \multirow{2}{*}{Teacher} & \multirow{2}{*}{VP-EDM~\citep{karras2022elucidating}} & 511 & {1.36}\\ %
 &  & 79 & 2.64&0.71&0.67 \\ %
 \midrule
 \multirow{8}{0.2\linewidth}{Direct generation} & RIN~\citep{jabri2022scalable} & 1000& 1.23&& \\
 & DDPM~\citep{ddpm} & 250& 11.00&0.67&0.58 \\ %
 &ADM~\citep{dhariwal2021diffusion}& 250 & 2.07 &{0.74}&{0.63}\\
 & DPM-Solver-3~\citep{lu2022dpmsolver} & 50 &  17.52&&\\
  &HSIVI-SM~\citep{yu2023hierarchical} & 15 &15.49&& \\
 & U-ViT~\citep{bao2022all} & 50 & 4.26&&\\
 & DiT-L/2~\citep{peebles2023scalable} & 250 & 2.91&&\\
 & LEGO~\citep{zheng2023learning} & 250 & 2.16&&\\
 & iCT~\citep{song2023improved} & 1 & 4.02&0.70&0.63 \\ 
 & iCT-deep~\citep{song2023improved} & 1 & 3.25&0.72&0.63 \\ 
\midrule
\multirow{17}{*}{Distillation}
 & PD~\citep{salimans2022progressive} &2 & 8.95&0.63&\textbf{0.65} \\
 & PD~\citep{salimans2022progressive} &1 & 15.39&0.59&0.62 \\
 &G-distill~\citep{meng2023distillation} ($w$=1.0) & 1 & 7.54&& \\
  &G-distill~\citep{meng2023distillation} ($w$=0.3) & 8 & 2.05 &&\\
 &BOOT \citep{gu2023boot} & 1& 16.3&0.68&0.36\\
 &PID~\citep{tee2024physics}  & 1 & 9.49 &&\\
 & DFNO~\citep{zheng2023fast} & 1 & 7.83 &&0.61 \\
 & CD-LPIPS \citep{song2023consistency}&2 & 4.70&0.69&0.64 \\
 & CD-LPIPS \citep{song2023consistency}&1 & 6.20 &0.68&0.63\\
 &Diff-Instruct~\citep{luo2023diffinstruct}&1 & 5.57 &&\\
 &TRACT~\citep{berthelot2023tract}&2 & 4.97 &&\\
 &TRACT~\citep{berthelot2023tract}&1 & 7.43&& \\
 &DMD~\citep{yin2023onestep} &1 & 2.62&& \\
&CTM~\citep{kim2023consistency}&1&{1.92}&& 0.57\\
&CTM~\citep{kim2023consistency}&2&\textbf{\textit{1.73}}&& 0.57\\
& DMD (\textit{w.o.} KL)~\citep{yin2023onestep} &1 & 9.21 &&\\
& DMD (\textit{w.o.} \textit{reg}.)~\citep{yin2023onestep} &1 & 5.61&& \\
&SiD (ours), $\alpha=1.0$&1 & 2.022$\pm$\scriptsize{0.031}&\textbf{\textit{0.73}}&\textbf{\textit{0.63}} \\ 
&SiD (ours), $\alpha=1.2$ &1 %
&\textbf{1.524}$\pm$\scriptsize{0.009} &\textbf{0.74}&\textbf{\textit{0.63}}\\ 
\bottomrule[1.5pt]
\end{tabular}
\end{adjustbox}%
\captionsetup[subfloat]{} 
 \centering
 {\includegraphics[width=.8\linewidth]{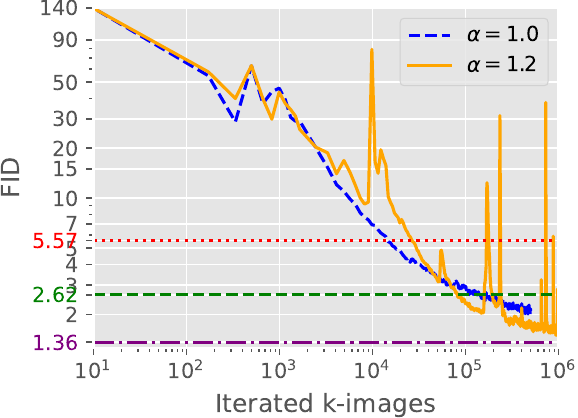}
 }
 \vspace{-2mm}
\captionof{figure}{\small Analogous plot to Fig.~\ref{fig:convergence_speed} for ImageNet 64x64. The batch size is 8192. See the results of batch size 1024 in Fig.~\ref{fig:imagenet_1024}.
 }
 \vspace{-2mm}
 \label{fig:imagenet_convergence_speed}
\end{minipage}
\end{figure}

\begin{figure}[!ht]
\centering
\begin{minipage}{\linewidth} 
\captionof{table}{\small
Analogous to Table \ref{tab:cifar10_uncond} for 
FFHQ 64x64.\label{tab:ffhq}}
\begin{adjustbox}{width=.85\textwidth,
center}
\begin{tabular}{clcc}
\toprule[1.5pt]
Family & Model &NFE & FID~($\downarrow$) \\
\midrule
 Teacher & {VP-EDM~\citep{karras2022elucidating}} & 79 & 2.39 \\ %
\midrule
\multirow{2}{*}{Diffusion} & VP-EDM~\citep{karras2022elucidating} & 50 & 2.60 \\ %
 & Patch-Diffusion~\citep{wang2023patch} & 50 & 3.11\\ %
\midrule
\multirow{3}{*}{Distillation}
&BOOT \citep{gu2023boot} & 1& 9.0\\ 
&SiD (ours), $\alpha=1.0$& 1& 
\textbf{\textit{1.710}} $\pm$ 0.018\\ 
&SiD (ours), $\alpha=1.2$ &1 & \textbf{1.550} $\pm$ 0.017\\
\bottomrule[1.5pt]
\end{tabular}
\end{adjustbox}%
\captionsetup[subfloat]{} 
 \centering
 {\includegraphics[width=.75\linewidth]{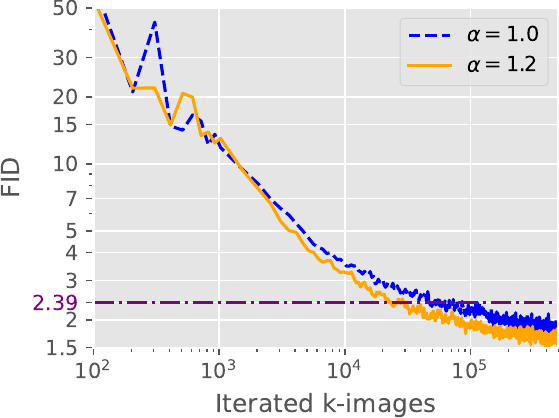}\label{fig:ffhq}}
 \vspace{-3mm}
 \captionof{figure}{\small Analogous plot to Fig.~\ref{fig:convergence_speed} for FFHQ 64x64. The batch size is 512.
 }
 \label{fig:ffhq_convergence_speed}
\end{minipage}
\vspace{2mm}

\centering
\begin{minipage}{\linewidth}
\captionof{table}{\small 
Analogous to Table \ref{tab:cifar10_uncond} for 
AFHQ-v2 64x64.\label{tab:afhq}}
\begin{adjustbox}{width=.85\textwidth,
center}
\begin{tabular}{clcc}
\toprule[1.5pt]
Family & Model &NFE & FID~($\downarrow$) \\
\midrule
 Teacher & {VP-EDM~\citep{karras2022elucidating}} & 79 & 1.96 \\ %
\midrule
\multirow{2}{*}{Distillation}
&SiD (ours), $\alpha=1.0$& 1& \textbf{{1.628}}$\pm$\scriptsize{0.017} \\ 
&SiD (ours), $\alpha=1.2$ &1 & \textbf{\textit{1.711}}$\pm$\scriptsize{0.020}\\
\bottomrule[1.5pt]
\end{tabular}
\end{adjustbox}%
\captionsetup[subfloat]{} 
 \centering
 {\includegraphics[width=.75\linewidth]{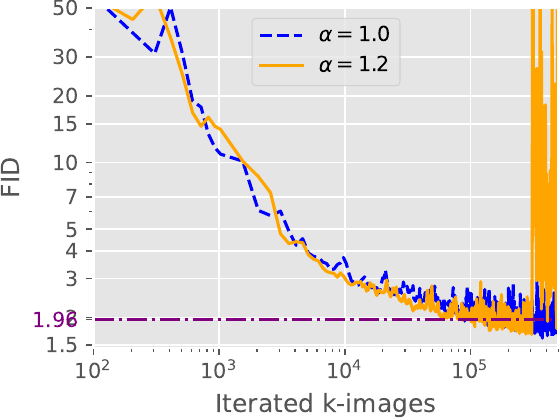}\label{fig:afhq}}
 \vspace{-2mm}
\captionof{figure}{\small Ananagous plot to Fig.~\ref{fig:convergence_speed} for AFHQ-v2 64x64. The batch size is 512.
 }
 \label{fig:afhq_convergence_speed}
\end{minipage}
\vspace{-2mm}
\end{figure}

\textbf{Ablation Study and Parameter Settings. } 
We provide ablation studies and discuss parameter settings in Appendix~\ref{sec:ablation}.

\subsection{Benchmark Performance}
Our comprehensive evaluation compares SiD against leading deep generative models, encompassing both distilled diffusion models and those built from scratch. Random images generated by SiD in a single step are displayed in Figs.\,\ref{fig:cifar10_sample_uncond}-\ref{fig:ffhq_images} in the Appendix.

The comparative analysis, detailed in Tables\,\ref{tab:cifar10_uncond}-\ref{tab:afhq} and illustrated in Figs.\,\ref{fig:convergence_speed}-\ref{fig:afhq_convergence_speed}, underlines the single-step SiD generator's proficiency in leveraging the insights from the pretrained EDM (teacher diffusion model) across a variety of benchmarks, including CIFAR-10 (both conditional and unconditional formats), ImageNet 64x64, FFHQ 64x64, and AFHQ-v2 64x64. Remarkably, the SiD-trained generator surpasses the EDM teacher in nearly all tested environments, showcasing its enhanced performance not just relative to the original multi-step teacher model but also against a broad spectrum of cutting-edge models, from traditional multi-step diffusion models to the latest single-step distilled models and GANs. The sole deviation in this pattern occurs with ImageNet 64x64, where SiD, at $\alpha=1.2$, attains an FID of 1.524, which is  exceeded by \citet{jabri2022scalable}'s RIN at 1.23 FID with 1000 steps and the teacher model VP-EDM's 1.36 FID with 511 steps. %

Our assessment of SiD across various benchmarks has established, with the exception of ImageNet 64x64, potentially the first instance, to our knowledge, where a data-free diffusion distillation method outperforms the teacher diffusion model using just a single generation step. This remarkable outcome implies that reverse sampling, which utilizes the pretrained score function for generating images across multiple steps and naturally accumulates discretization errors during reverse diffusion, might not be as efficient as a single-step distillation process. The latter, by sidestepping error accumulation, could theoretically align perfectly with the true data distribution when the model-based score-matching loss is completely minimized.

Among the single-step generators we've evaluated, CTM \citep{kim2023consistency} is SiD's closest competitor in terms of generation performance. Despite the tight competition, SiD not only surpasses CTM but is also noteworthy for its independence from training data. In contrast, CTM's performance relies on access to training data and is augmented by the inclusion of an auxiliary GAN loss. This distinction significantly amplifies SiD's value, particularly in contexts where accessing the original training data is either restricted or impractical, and where data-specific GAN adjustments are undesirable.

In summary, SiD not only stands out in terms of performance metrics but also simplifies the distillation process remarkably, operating without the need for real data. It sets itself apart by employing a notably straightforward distillation approach, unlike the complex multi-stage distillation strategy seen in \citet{salimans2022progressive}, the dependency on pairwise regression data in \citet{yin2023onestep}, the use of additional GAN loss in \citet{kim2023consistency}, or the need to access training data outlined in \citet{song2023consistency}.

\textbf{Training Iterations. }
In exploring SiD's performance threshold, we initially process 500 million SiD-generated synthetic images across most benchmarks. For CIFAR-10 with label conditioning, this figure increases to 800 million synthetic images for SiD with $\alpha=1.2$. In the case of ImageNet 64x64, we extend the training for SiD with  $\alpha=1.2$ to involve 1 billion synthetic images. Through this extensive training, SiD demonstrates superior performance over the EDM teacher model across all evaluated benchmarks, with the sole exception of ImageNet 64x64, where  EDM utilized 511 NFE. While we note a gradual slowing down in the rate of FID improvements, the limit of potential further reductions is not clear, indicating that with more iterations, SiD might eventually outperform EDM on ImageNet 64x64 as well.

It's noteworthy that to eclipse the achievements of rivals like Diff-instruct and DMD, SiD requires significantly fewer synthetic images than the 500 million mark, thanks to its rapid FID reduction rate. This decline often continues without evident stagnation,  surpassing the teacher model's performance before the conclusion of the training. We delve into this aspect further below.

\textbf{Convergence Speed. } 
Our SiD generator, designed for distilling pretrained diffusion models, rapidly achieves the capability to generate photo-realistic images in a single step. This efficiency is showcased in Fig.\,\ref{fig:imagenet_progress} for the EDM model pretrained on ImageNet 64x64 and in Fig.\,\ref{fig:cifar_alpha_image} for CIFAR 32x32 (unconditional).
The performance of the SiD method is further highlighted in Figs.\,\ref{fig:convergence_speed}-\ref{fig:afhq_convergence_speed}, where the x-axis represents the thousands of images processed during training. These figures track the FID's evolution across four datasets for both \(\alpha=1\) and \(\alpha=1.2\), demonstrating a roughly linear relationship between the logarithm of the FID and the logarithm of  the number of processed images. This relationship indicates that FID decreases exponentially as distillation progresses, a trend that is observed or expected to eventually slow down and approach a steady state.

For instance, on CIFAR-10 (unconditional), SiD outperforms both Diff-Instruct and DMD after processing under 20M images, achievable within fewer than 10 hours on 16 A100-40GB GPUs, or 20 hours on 8  V100-16GB GPUs. In the case of ImageNet 64x64 with a batch size of 1024 and $\alpha=1.0$, SiD exceeds Progressive Distillation (FID 15.39) of \citet{salimans2022progressive} after only around 500k generator-synthesized images (equivalent to roughly 500 iterations with a batch size of 1024), achieving FIDs lower than 5 after 7.5M images, below 4 after 13M, and under 3 after 31M images. It outperforms Diff-Instruct with fewer than 7M images processed and DMD with under 40M images. When using a larger batch size of 8192, SiD's convergence is slower, yet it attains lower FIDs: with $\alpha=1$, it outstrips Diff-Instruct after processing less than 20M images (under 20 hours on 16 A100-40GB  GPUs), and with $\alpha=1.2$, it beats DMD after fewer than 90M images (in under 45 hours on 16 A100-40GB GPUs).

\textbf{Limitations. }
Despite setting a new benchmark in diffusion-based generation, SiD entails the simultaneous management of three networks during the distillation process: the pretrained score network $f_{\phi}$, the generator score network $f_{\psi}$, and the generator $f_{\theta}$, which, in this study, are maintained at equal sizes. This setup demands more memory compared to traditional diffusion model training, which only necessitates retaining $f_{\phi}$. However, the memory footprint of the two additional networks could be notably reduced by employing LoRA \citep{hu2022lora} for both $f_{\psi}$ and $f_{\theta}$, a possibility we aim to explore in future research.

Relative to Diff-Instruct, acknowledged for its memory and computational efficiency in distillation, as detailed in Table\,\ref{tab:Hyperparameters} in the Appendix for 16 A100-40GB GPUs, the memory allocation per GPU of SiD has seen a rise of around 50\% for ImageNet 64x64 and about 70\% for CIFAR-10, FFHQ, and AFHQ. The iteration time has increased by approximately 28\% for CIFAR-10 and ImageNet 64x64, and by roughly 36\% for the FFHQ and AFHQ datasets.
This increase is because Diff-Instruct does not require computing score gradients, as defined in \eqref{eq:scoregrad}. By contrast, SiD necessitates computing score gradients, involving backpropagation through both the pretrained and generator score networks—a step not needed in Diff-Instruct—leading to about a one-third increase in computing time per iteration.

\section{Conclusion}
We present Score identity Distillation (SiD), an innovative method that transforms pretrained diffusion models into a single-step generator. By employing semi-implicit distributions, SiD aims to accomplish distillation through the minimization of a model-based score-matching loss that aligns the scores of diffused real and generative distributions across different noise intensities. Experimental outcomes underscore SiD's capability to significantly reduce the Fréchet inception distance with remarkable efficiency and outperform established generative approaches. This superiority extends across various conditions, including those using single or multiple steps, those requiring or not requiring access to training data, and those needing additional loss functions in image generation.

\section*{Acknowledgments}
The authors would like to thank Dr. Zhuoran Yang and Weijian Luo for their valuable comments and suggestions. M. Zhou, H. Zheng, and Z. Wang acknowledge the support of NSF-IIS 2212418 and NIH-R37 CA271186.

\section*{Impact Statement}
The positive aspect of distilled diffusion models lies in their potential to save energy and reduce costs. By simplifying and compressing large models, the deployment of distilled models often requires less computational resources, making them more energy-efficient and cost-effective. This can lead to advancements in sustainable AI practices, especially in resource-intensive applications.

However, the negative aspect arises when considering the ease of distilling models trained on violent or pornographic data. This poses significant ethical concerns, as deploying such models may inadvertently facilitate the generation and dissemination of harmful content. The distillation process, intended to transfer knowledge efficiently, could unintentionally amplify and perpetuate inappropriate patterns present in the original data. This not only jeopardizes user safety but also raises ethical and societal questions about the responsible use of AI technology. Striking a balance between the positive gains in energy efficiency and the potential negative consequences of distilling inappropriate content is crucial for the responsible development and deployment of AI models. Stringent ethical guidelines and oversight are essential to mitigate these risks and ensure the responsible use of distilled diffusion models.


\bibliographystyle{icml2024}
\bibliography{reference.bib,ref.bib}

\newpage
\appendix
\onecolumn

\begin{center}
   \Large{\textbf {Appendix for Score identity Distillation}} 
\end{center}


\section{Ablation Study and Parameter Settings}\label{sec:ablation}

\textbf{Impact of $\alpha$.} We conduct an ablation study to examine the impact of  \(\alpha\) on SiD. In Fig.~\ref{fig:cifar_alpha_image}, we investigate a range of \(\alpha\) values \([-0.25, 0.0, 0.5, 0.75, 1.0, 1.2, 1.5]\) during SiD training on CIFAR10-unconditional and visualize the changes in generation as training progresses under each \(\alpha\). For instance, the last row illustrates the generation results when 10.24 million images (equivalent to 40,000 iterations with a batch size of 256) are processed by SiD.
In Fig.~\ref{fig:cifar_ablate}, we illustrate the evolution of the FID and IS from iterations 0 to 8000 (corresponding to 0 to 1.024 million images), where the first plot depicts the IS evolution, while the second plot shows the trajectory of FID.

The results indicate a stable performance of the model when \(\alpha\) varies from 0 to 1.2. A negative value of \(\alpha\) results in large FIDs. This observation supports our analysis in \cref{sec:4.2} that directly optimizing \(\mathcal{L}_{\theta}^{(1)}\) given by \eqref{eq:L_theta1} may not lead to meaningful improvement, as our loss, shown in \eqref{eq:loss_combine}, is \(\mathcal{L}_{\theta}^{(2)} - \alpha \mathcal{L}_{\theta}^{(1)}\). As \(\alpha\) increases within the tested range, we observe a gradual improvement in IS and FID performance, peaking at \(\alpha=1\) or \(\alpha=1.2\).
Based on these findings, we select \(\alpha=1\) or \(\alpha=1.2\) for all our experiments, although a more refined grid search on $\alpha$ might reveal even better performance outcomes.

\textbf{Setting of $\beta_1$.} We investigate the $\beta_1$ parameter of the Adam optimizer for the generator score network $f_{\psi}$ and the generator $G_{\theta}$ by setting it as either $\beta_1=0$, the value used in StyleGAN2 \citep{Karras2019stylegan2}  and Diff-Instruct \citep{luo2023diffinstruct}, or $\beta_1=0.9$, a commonly used value. We find that setting $\beta_1=0.9$ for $f_{\psi}$ often does not result in convergence, so we retain $\beta_1=0$ for $f_{\psi}$ for all datasets. For learning $G_{\theta}$, we did not observe significant benefits between setting $\beta_1=0$ and $\beta_1=0.9$, except for the FFHQ dataset, where the FID improved by more than 0.15 when changing from $\beta_1=0$ to $\beta_1=0.9$. Therefore, we set $\beta_1=0.9$ for $G_{\theta}$ in FFHQ while retaining $\beta_1=0$ for all other datasets.

\textbf{Batch Size for ImageNet 64x64.}
For ImageNet 64x64, we initially set the batch size to 1024 and observed an exponential decline in FID until it suddenly diverged upon reaching or surpassing 2.62, the FID obtained by DMD \citep{yin2023onestep}. The exact reason for this divergence is still unclear, but we suspect it may be related to the FP16 precision used during optimization. While switching to FP32 could potentially address the issue, we have not explored this option due to its much higher computational and memory costs.

Instead, we increased the overall batch size from 1024 to 8192 (while keeping the batch per GPU unchanged at 16, requiring more gradient accumulation rounds) and reduced the learning rate from 5e-6 to 4e-6. Under $\alpha=1$, we observed stable performance, while under $\alpha=1.2$, we observed occasional spikes in FID. Upon examining the generations corresponding to these spikes, as shown in the fifth image of Fig.~\ref{fig:imagenet_progress2} and Fig.~\ref{fig:spikes} in the Appendix, we found interesting patterns where certain uncommon features, such as nests containing birds, were exaggerated. However, with a batch size as large as 8192, these occasional spikes did not seem to significantly impact the overall declining trend, which was roughly log-log linear initially and gradually leveled off. With that said, when the batch size was reduced to 1024, the sudden divergence could potentially be caused by such a spike, as observed in Fig.~\ref{fig:imagenet_1024}.

The drawback of using a larger batch size in this case is that it takes SiD longer to outperform Diff-instruct and DMD, as clearly shown by comparing the FID trajectories in Figs.~\ref{fig:imagenet_convergence_speed} and \ref{fig:imagenet_1024}.
Although it's feasible to develop more advanced strategies, including progressively increasing the batch size, annealing the learning rate, and implementing gradient clipping, we'll reserve these for future study.

\newpage

\begin{figure*}[ht]
    \setlength\tabcolsep{1pt} %
    
    \centering

    \rotatebox{90}{\!\!\!\!\!\!\!\!\!\!\!\!\!\!\! 102k (400)} %
    \begin{tabular}{ccccccc}
    $\alpha=-0.25$ &$\alpha=0$& $\alpha=0.5$ & $\alpha=0.75$ & $\alpha=1$ & $\alpha=1.2$& $\alpha=1.5$\\
        \includegraphics[width=0.13\linewidth]{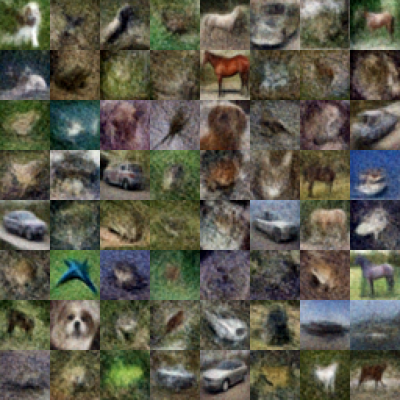} &
        \includegraphics[width=0.13\linewidth]{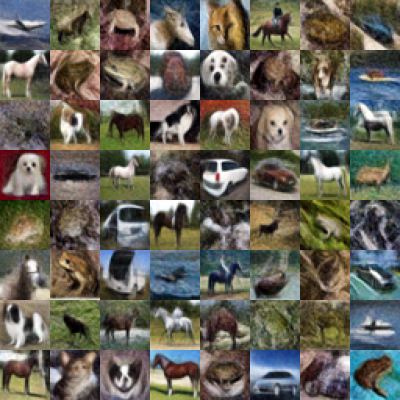} &
        \includegraphics[width=0.13\linewidth]{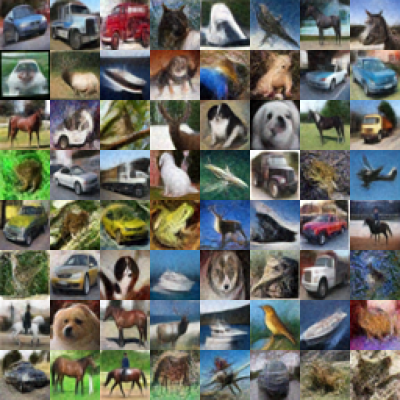} &
        \includegraphics[width=0.13\linewidth]
        {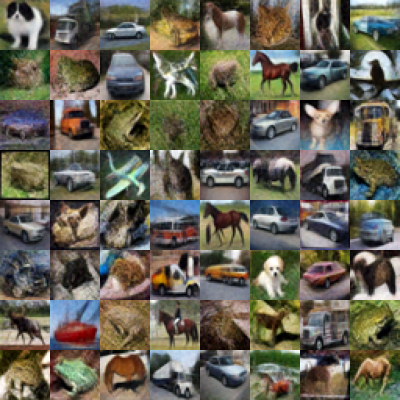} &
        \includegraphics[width=0.13\linewidth]{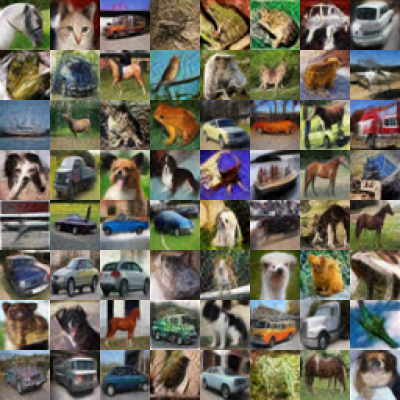} &
        \includegraphics[width=0.13\linewidth]{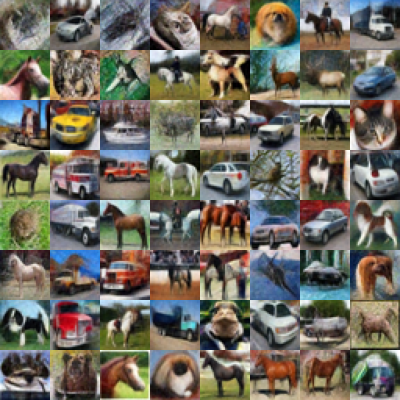} &
        \includegraphics[width=0.13\linewidth]{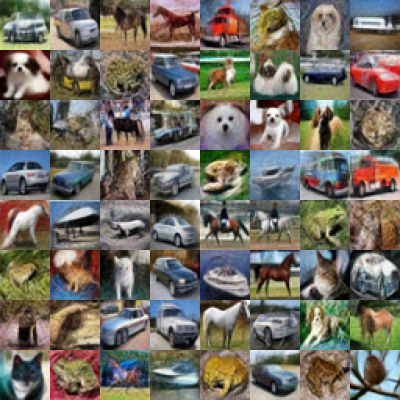} \\
    \end{tabular}

   \vspace{-3pt} %

    \rotatebox{90}{\!\!\!\!\!\!\!\!\!\!\!\!\!\! 512k (2000)} %
    \begin{tabular}{ccccccc}
        \includegraphics[width=0.13\linewidth]{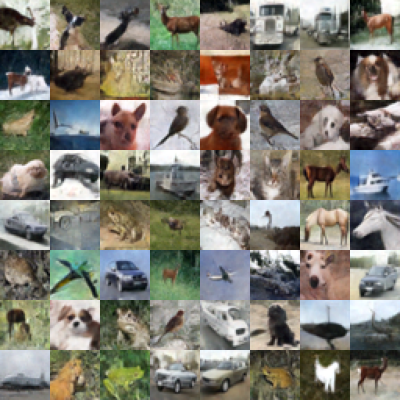} &
        \includegraphics[width=0.13\linewidth]{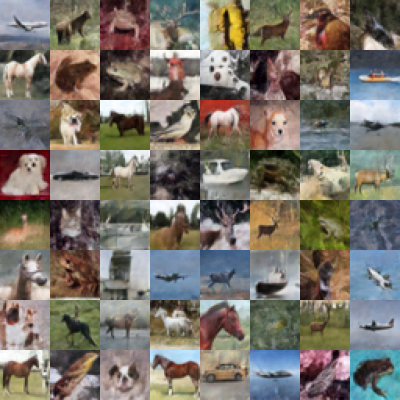} &
        \includegraphics[width=0.13\linewidth]{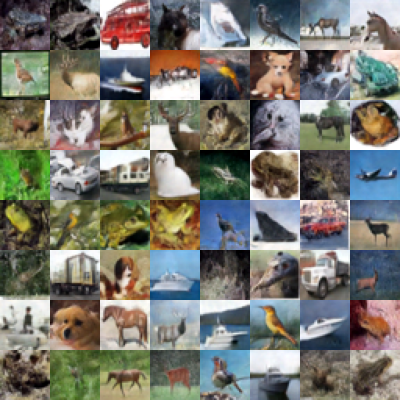} &
        \includegraphics[width=0.13\linewidth]{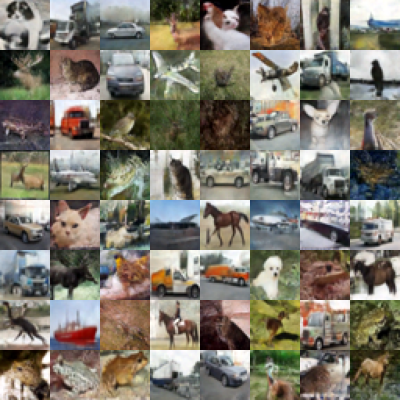} &
        \includegraphics[width=0.13\linewidth]{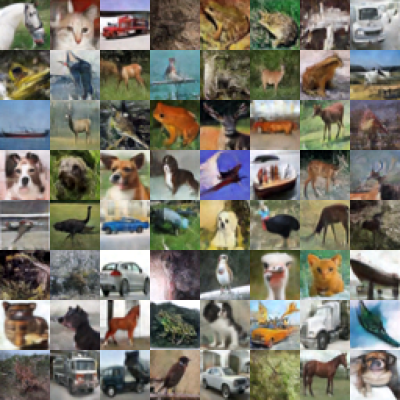} &
        \includegraphics[width=0.13\linewidth]{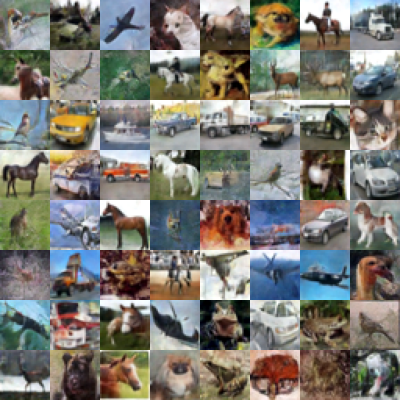} &
        \includegraphics[width=0.13\linewidth]{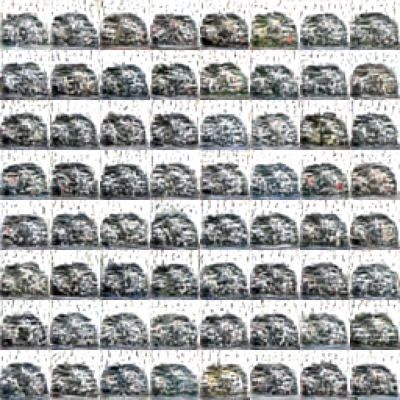} \\
    \end{tabular}

    \vspace{-3pt} %

    \rotatebox{90}{\!\!\!\!\!\!\!\!\!\!\!\! \!\!\!\!\!  1024k (4000)} %
    \begin{tabular}{ccccccc}
        \includegraphics[width=0.13\linewidth]{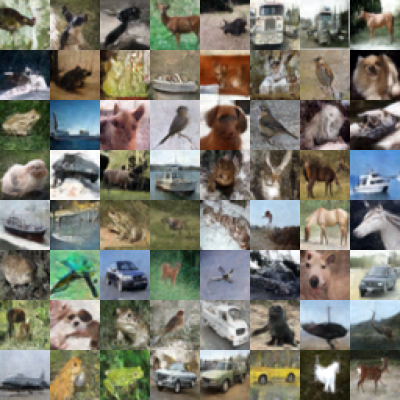} &
        \includegraphics[width=0.13\linewidth]{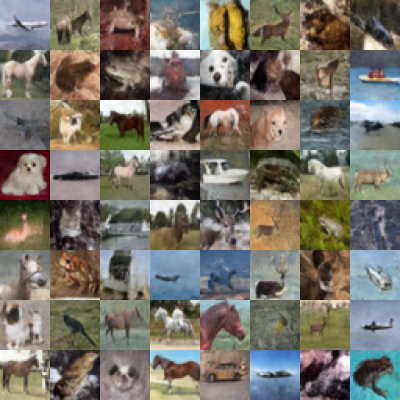} &
        \includegraphics[width=0.13\linewidth]{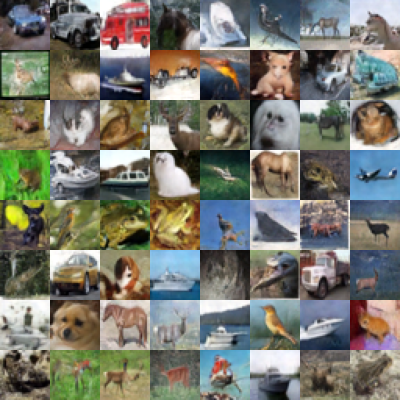} &
        \includegraphics[width=0.13\linewidth]{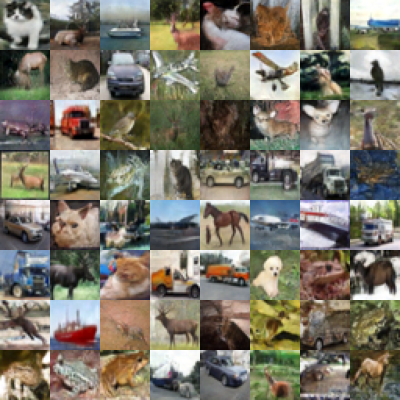} &
        \includegraphics[width=0.13\linewidth]{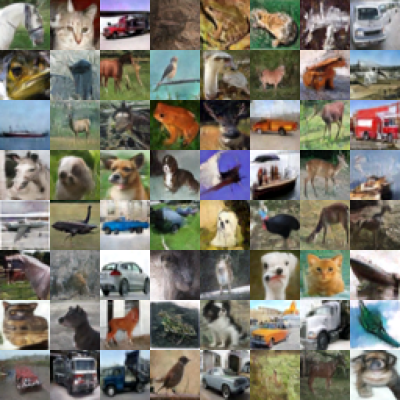} &
        \includegraphics[width=0.13\linewidth]{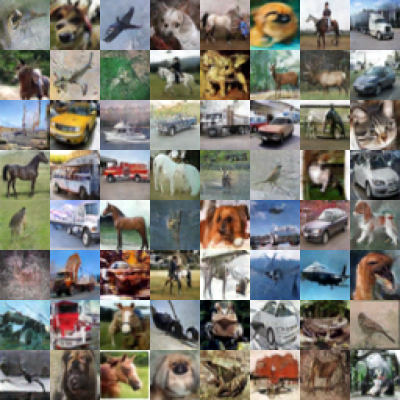} &
        \includegraphics[width=0.13\linewidth]{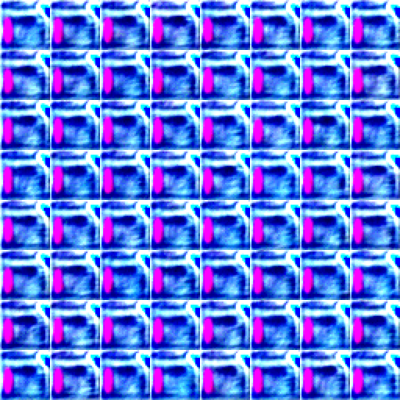} \\
    \end{tabular}

    \vspace{-3pt} %

    \rotatebox{90}{\!\!\!\!\!\!\!\!\!\!\!\!  \!\!\!\!\!\!\!\!\! 10240k (40000)} %
    \begin{tabular}{ccccccc}
        \includegraphics[width=0.13\linewidth]{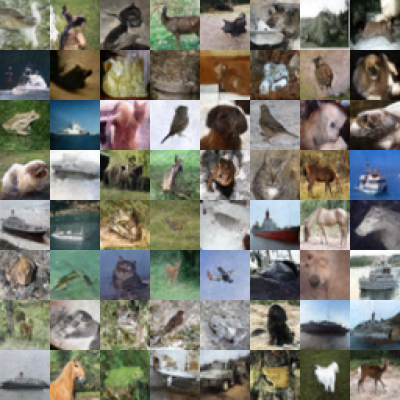} &
        \includegraphics[width=0.13\linewidth]{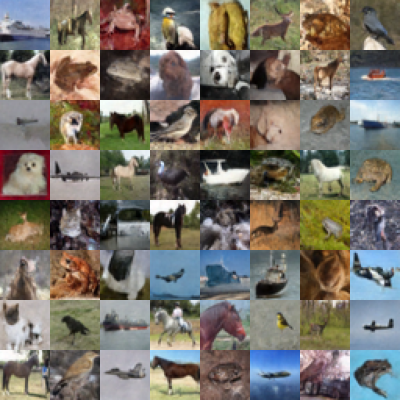} &
        \includegraphics[width=0.13\linewidth]{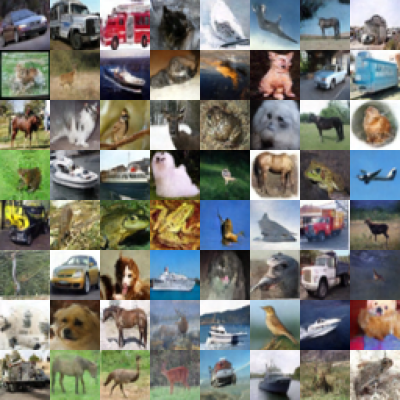} &
        \includegraphics[width=0.13\linewidth]{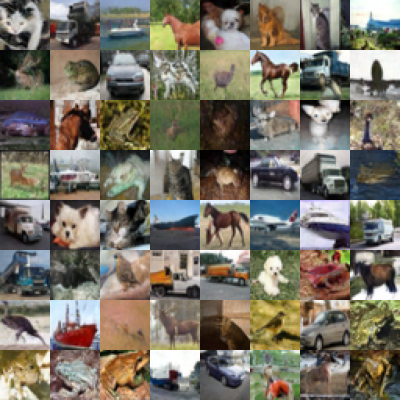} &
        \includegraphics[width=0.13\linewidth]{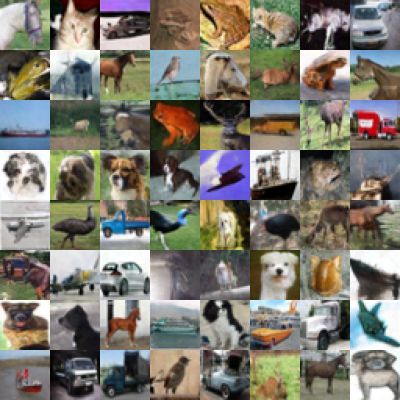} &
        \includegraphics[width=0.13\linewidth]{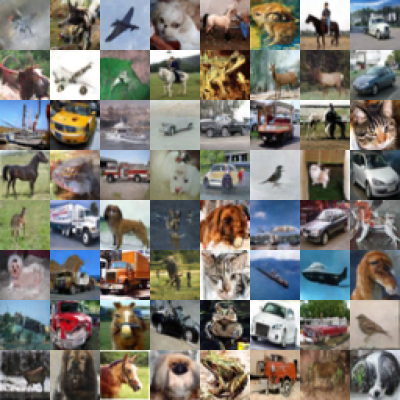} &
        \includegraphics[width=0.13\linewidth]{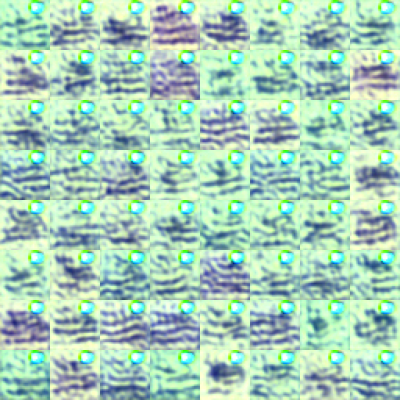}
    \end{tabular}
\caption{
Ablation Study of $\alpha$: The SiD generator, configured with various $\alpha$ values, was trained with its own synthesized images at a batch size of 256. The results, sorted by specific $\alpha$ values, are displayed in columns. Sequentially from top to bottom, the rows are labeled with both the total number of training images and the corresponding number of iterations, denoted as ``number of images (iterations).'' This labeling approach indicates the cumulative count of fake images utilized during training, corresponding to iterations of 400, 2,000, 4,000, and 40,000, progressing from the first row to the last. 
Across the 
$\alpha$  values of 0.5, 0.75, 1, and 1.2, minor differences are noted in both the Inception Score (IS) and visual quality, yet the Fréchet Inception Distance (FID) shows notable variations, as detailed in Fig.\,\ref{fig:cifar_ablate}.
 }
 \label{fig:cifar_alpha_image}
\end{figure*}

\begin{figure*}[!ht]
\centering
\includegraphics[width=.33\linewidth]{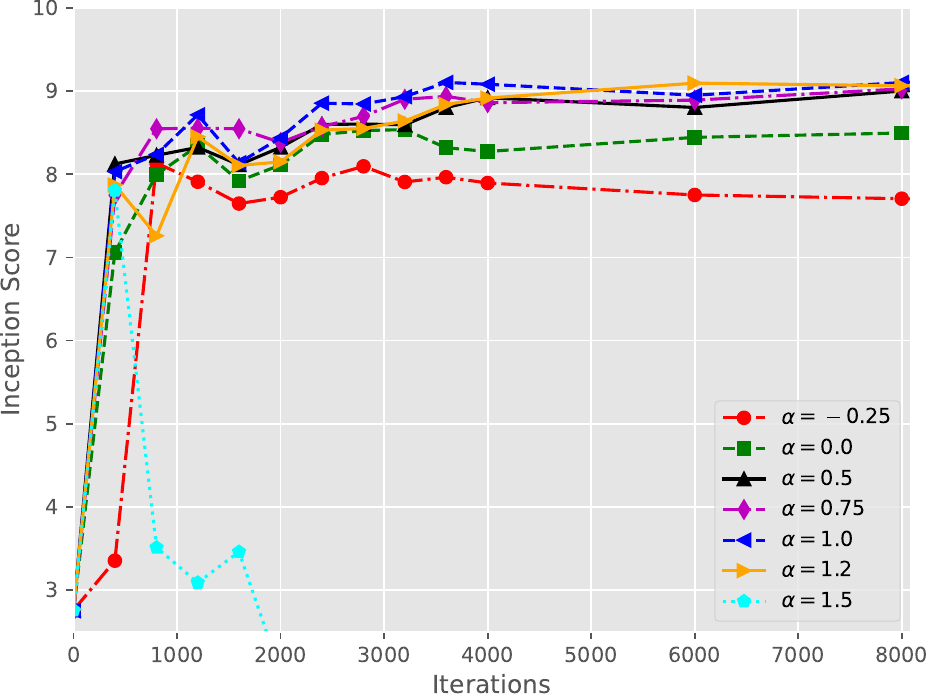}
~~~~~~~~~~~~~~~~~~~~~
\includegraphics[width=.33\linewidth]{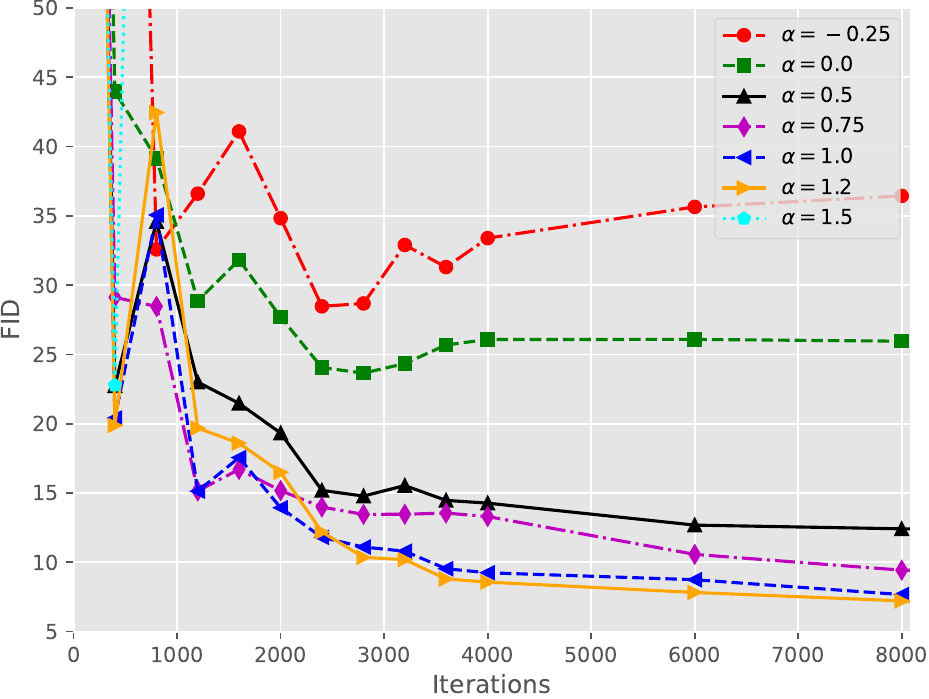}
 \caption{\small Ablation Study of $\alpha$: Each plot illustrates the relation between the performance, measured by Inception Score and FID \textit{vs}. the number of training iterations 
 during the distillation of the EDM model pretrained on CIFAR-10 (unconditional), across varying values of $\alpha$.
 The study underscores the impact of $\alpha$ on both training efficiency and generative fidelity, leading us to select $\alpha \in \{1.0, 1.2\}$ for all subsequent experiments.
 }
 \label{fig:cifar_ablate}
\end{figure*}

\newpage

\section{Algorithm Box}
\begin{algorithm}[ht]
\caption{Score identity Distillation (SiD)}\label{alg:sid}
\begin{algorithmic}
\small
\STATE \textbf{Input:} Pretrained score network $f_\phi$, generator $G_\theta$, generator score network $f_\psi$, $\sigma_{\text{init}}=2.5$, $t_{\max}=800$, $\alpha=1.2$
\STATE \textbf{Initialization} $\theta \leftarrow \phi, \psi \leftarrow \phi$ 
\REPEAT
\STATE Sample $\zv \sim \cN(0,  \mathbf{I})$ and let $\xv_g= G_\theta(\sigma_{\text{init}}\zv)$; Sample $t\sim p(t)$ and $\epsilonv_t\sim \mathcal{N}(0,\mathbf{I})$, and let $\xv_t = \xv_g + \sigma_t \epsilonv_t$; Update $\psi$ with \Cref{eq:obj-psi}:
\STATE ~~~~$\hat{\cL}_{\psi} = {\gamma(t)} %
\|f_{\psi}(\xv_t,t)-\xv_g\|_2^2 $
\STATE ~~~~$\psi = \psi - \eta \nabla_\psi \hat{\cL}_{\psi}$
\STATE where the timestep distribution $t\sim p(t)$, noise level $\sigma_t$,
and 
weighting function $\gamma(t)$ %
are defined as in
\citet{karras2022elucidating}. 

\STATE Sample $\zv \sim \cN(0,  \mathbf{I})$ and let $\xv_g= G_\theta(\sigma_{\text{init}}\zv)$; 
Sample $t\sim \mbox{Unif}[0,t_{\max}/1000]$, compute $\sigma_t$ with \Cref{eq:sigma_t},  compute $\omega_t$ with \Cref{eq:weight}, and let $\xv_t = \xv_g + \sigma_t \epsilonv_t$; Update $G_\theta$ with \Cref{eq:obj-theta}:
\STATE ~~~~$\tilde{\cL}_{\theta} =(1 -\alpha) \frac{\omega(t)}{{\sigma_t^4} } \|f_{\phi}(\xv_t,t)-f_{\psi}(\xv_t,t)\|_2^2 $
\STATE ~~~~~~~~~~$+ \frac{\omega(t)}{{\sigma_t^4} } (f_{\phi}(\xv_t,t)-f_{\psi}(\xv_t,t))^T(f_{\psi}(\xv_t,t)-\xv_g)$
\STATE ~~~~$\theta = \theta - \eta \nabla_\theta \tilde{\cL}_{\theta}$
\UNTIL{the FID plateaus or the training budget is exhausted}
\STATE \textbf{Output:} $G_\theta$
\normalsize
\end{algorithmic}
\end{algorithm} %


\section{Proofs}\label{sec:proof}

\begin{proof}[Proof of Tweedie's formula]
For Gaussian diffusion, we have \eqref{eq:Gaussian_conditional_score}, which we explore to derive the identity shown below. 
While $p_{\theta}(\xv_t)$ often does not have an analytic form, exploiting its semi-implicit construction, its score can be expressed as
\ba{
\nabla_{\xv_t} \ln p_{\theta}(\xv_t) &= \frac{\int \nabla_{\xv_t}q(\xv_t\given \xv_g)p_{{\theta}}(\xv_g) \mathrm   d\xv_g }{p_{\theta}(\xv_t)}\notag\\
&= \frac{\int q(\xv_t\given \xv_g) \nabla_{\xv_t} \ln q(\xv_t\given \xv_g) p_{\theta}(\xv_g)\mathrm  d\xv_g }{p_{\theta}(\xv_t)}\notag\\
&= -\frac{\int q(\xv_t\given \xv_g)\frac{\xv_t-a_t\xv_g}{\sigma_t^2} p_{\theta}(\xv_g)\mathrm  d\xv_g }{p_{\theta}(\xv_t)}\notag\\
&= -\frac{\xv_t}{\sigma_t^2} +\frac{a_t}{\sigma_t^2} \frac{\int \xv_g q(\xv_t\given \xv_g) p_{\theta}(\xv_g)\mathrm  d\xv_g }{p_{\theta}(\xv_t)}\notag\\
&= -\frac{\xv_t}{\sigma_t^2} +\frac{a_t}{\sigma^2} \int \xv_g q(\xv_g\given \xv_t) \mathrm d\xv_g\notag\\
&= -\frac{\xv_t}{\sigma_t^2} +\frac{a_t}{\sigma_t^2} \E[\xv_g\given \xv_t]
}
%
%
%
Therefore, we have
\ba{
\E[\xv_g\given \xv_t] = \frac{\xv_t+\sigma_t^2 \nabla_{\xv_t} \ln q_g(\xv_t)}{a_t}
}
which is known as the Tweedie's formula. Setting $a_t=1$ recovers the identity presented in the main body of the paper. %
\end{proof}

\begin{proof}[Proof of Identity~\ref{projected_score}.]
\ba{
&\E_{\xv_t\sim p_{\theta}(\xv_t)}[u^T(\xv_t) \nabla_{\xv_t}\ln p_{\theta}(\xv_t)]\notag\\ &= \E_{\xv_t\sim p_{\theta}(\xv_t)}\left[u^T(\xv_t) \frac{\nabla_{\xv_t} p_{\theta}(\xv_t)}{p_{\theta}(\xv_t)}\right] \notag\\
&= \int{u^T(\xv_t)\nabla_{\xv_t} p_{\theta}(\xv_t)}\mathrm d \xv_t\notag\\
& = \int u^T(\xv_t) \int \nabla_{\xv_t} q(\xv_t\given \xv_g) p_{\theta}(\xv_g)  \mathrm d \xv_g \mathrm d\xv_t\notag\\
& = \int u^T(\xv_t) \int q(\xv_t\given \xv_g) \nabla_{\xv_t} \ln q(\xv_t\given \xv_g) p_{\theta}(\xv_g)\mathrm  d \xv_g \mathrm d\xv_t\notag\\
& = \E_{ (\xv_t,\xv_g)\sim q(\xv_t\given \xv_g) p_{\theta}(\xv_g)}[u^T(\xv_t) \nabla_{\xv_t} \ln q(\xv_t\given \xv_g)].
}
\end{proof}

\begin{proof}[Proof of Theorem~\ref{thm:project_SM}]
 Expanding the $L_2$ norm, we have

\ba{
&\E_{\xv_t\sim p_{\theta}(\xv_t)}\|S(\xv_t)-\nabla_{\xv_t}\ln p_{\theta}(\xv_t)\|_2^2 \notag\\
&=\frac{1}{\sigma_t^2}\E_{\xv_t\sim p_{\theta}(\xv_t)}[(\E[\xv_0\given \xv_t]-\E[\xv_g\given \xv_t])^T(S(\xv_t)-\nabla_{\xv_t}\ln p_{\theta}(\xv_t))]\notag\\
&=\underbrace{\frac{1}{\sigma_t^2}\E_{\xv_t\sim p_{\theta}(\xv_t)}\left[(\E[\xv_0\given \xv_t]-\E[\xv_g\given \xv_t])^TS(\xv_t)\right]}_{\textcircled{1}}
-\underbrace{\frac{1}{\sigma_t^2}\E_{\xv_t\sim p_{\theta}(\xv_t)}[(\E[\xv_0\given \xv_t]-\E[\xv_g\given \xv_t])^T\nabla_{\xv_t}\ln p_{\theta}(\xv_t)]}_{\textcircled{2}}
}
denote 
\ba{
\textcircled{1} & =\frac{1}{\sigma_t^2} \E_{\xv_t\sim p_{\theta}(\xv_t)}\left[\delta_{\phi,\psi^*(\theta)}(\xv_t)^T(\E[\xv_0\given\xv_t]-\xv_t)\right] \notag\\
&= \frac{1}{\sigma_t^2}\E_{\xv_t\sim p_{\theta}(\xv_t)}[\delta_{\phi,\psi^*(\theta)}(\xv_t)^T\E[\xv_0\given \xv_t]]-\frac{1}{\sigma_t^2}\E_{\xv_t\sim p_{\theta}(\xv_t)}[\delta_{\phi,\psi^*(\theta)}(\xv_t)^T\xv_t)] 
}
\ba{
\textcircled{2} &= \E_{\xv_g\sim p_{\theta}(\xv_g)}\E_{ \xv_t\sim q(\xv_t\given \xv_g,t)}[\delta_{\phi,\psi^*(\theta)}(\xv_t)^T\nabla_{\xv_t}\ln 
q(\xv_t\given \xv_g))] \notag\\
&= \frac{1}{\sigma_t^2}\E_{\xv_g\sim p_{\theta}(\xv_g)}\E_{ \xv_t\sim q(\xv_t\given \xv_g,t)}[\delta_{\phi,\psi^*(\theta)}(\xv_t)^T(\xv_g-\xv_t)]\notag\\
&= \frac{1}{\sigma_t^2}\E_{\xv_g\sim p_{\theta}(\xv_g)}\E_{ \xv_t\sim q(\xv_t\given \xv_g,t)}[\delta_{\phi,\psi^*(\theta)}(\xv_t)^T\xv_g]- \frac{1}{\sigma_t^2}\E_{ \xv_t\sim p_{\theta}(\xv_t)}[\delta_{\phi,\psi^*(\theta)}(\xv_t)^T\xv_t]
}
Therefore we have
\ba{
L = \textcircled{1}-\textcircled{2} %
= \frac{1}{\sigma_t^2}\E_{\xv_g\sim p_{\theta}(\xv_g)}\E_{ \xv_t\sim q(\xv_t\given \xv_g,t)}[\delta_{\phi,\psi^*(\theta)}(\xv_t)^T(\E[\xv_0\given \xv_t]-\xv_g)]
}
\end{proof}

\section{Analytic study of the toy example}
\label{sec:toy}

We prove the conclusions in \Cref{prop:toy,prop:toy2}. 
Given $p_\text{data}(\xv_0) = \mathcal{N}(\mathbf{0},\mathbf{I})$, $p_\theta(\xv_g) = \mathcal{N}(\theta,\mathbf{I})$, $q(\xv_t\given \xv_0) = \mathcal N(\xv_t;\xv_0,\sigma_t^2\mathbf{I})$, and  $q(\xv_t\given \xv_g) = \mathcal N(\xv_t;\xv_g,\sigma_t^2\mathbf{I})$, we have $p_{\text{data}}(\xv_t)=\mathcal{N}(0,(1+\sigma_t^2)\mathbf{I})$ and $p_{\theta}(\xv_t)=\mathcal{N}(\theta,(1+\sigma_t^2)\mathbf{I})$. 
The optimal value of $\theta$ would be $\theta^*=0$. The score can be expressed as
\bas{
&S(\xv_t) = \nabla_{\xv_t}\ln p_{\text{data}}(\xv_t) = -\frac{\xv_t}{1+\sigma_t^2} \\
&\nabla_{\xv_t}\ln p_{\theta}(\xv_t) = -\frac{\xv_t-\theta}{1+\sigma_t^2}.
}
Hence, the difference between the scores is $\delta_{\phi,\psi^*(\theta)}(\xv_t) = - \frac{\theta}{1+\sigma_t^2}$. 
By applying Tweedie's formula as described in Identities~\ref{identity1} and \ref{identity3}, we obtain
\bas{
&f_{\phi}(\xv_t,t)=\E[\xv_0\given \xv_t] 
= \frac{\xv_t}{1+\sigma_t^2} \\
&\E[\xv_g\given \xv_t] = \xv_t\frac{1}{1+\sigma_t^2} + \theta \frac{\sigma_t^2}{1+\sigma_t^2}
}

By assumption we have
$$
f_{\psi}(\xv_t,t) = \xv_t\frac{1}{1+\sigma_t^2} + \psi \frac{\sigma_t^2}{1+\sigma_t^2},
$$
which means $\psi^*(\theta)=\theta$, then by \Cref{eq:delta0} we have
\ba{
{\delta}_{\phi,\psi}(\xv_t) = \sigma_t^{-2}(f_{\phi}(\xv_t,t)-
f_{\psi}(\xv_t,t)) = - \frac{\psi}{1+\sigma_t^2}.
}
Accordingly,
\ba{
\hat{L}_{\theta}^{(1)} = {\delta}_{\phi,\psi}(\xv_t)^T{\delta}_{\phi,\psi}(\xv_t) = \frac{\psi^2}{(1+\sigma_t^2)^2} 
}
Therefore, while $\hat{L}_{\theta} = \delta_{\phi,\psi^*(\theta)}(\xv_t)^T\delta_{\phi,\psi^*(\theta)}(\xv_t)= \frac{\theta^2}{(1+\sigma_t^2)^2}$ would provide useful gradient to learn $\theta$, its naive approximation $\hat{L}_{\theta}^{(1)}$ could fail to provide meaningful gradient. 

We can further compute
\bas{
\hat{L}_{\theta}^{(2)} &= \hat{L}_{\theta}^{(1)} +\frac{{\delta}_{\phi,\psi}(\xv_t)^T(f_{\psi}(\xv_t,t)-\xv_g)}{\sigma_t^2} \notag \\
&= \frac{\psi^2}{(1+\sigma_t^2)^2} - \frac{\psi}{\sigma_t^2(1+\sigma_t^2)}(\xv_t\frac{1}{1+\sigma_t^2} + \psi \frac{\sigma_t^2}{1+\sigma_t^2}-\xv_g)\notag\\
&=\frac{\psi }{(1+\sigma_t^2)^2}\left[\xv_g-\frac{\epsilon_t}{\sigma_t}\right]. 
}
Thus
\bas{
\nabla_\theta \hat{L}_{\theta}^{(2)} &= \frac{\psi }{(1+\sigma_t^2)^2}\nabla_{\theta} G_{\theta}(z) \\
&=- \frac{1}{1+\sigma_t^2} {\delta}_{\phi,\psi}(\xv_t) \nabla_{\theta} G_{\theta}(z) \\
&\approx \frac{1}{1+\sigma_t^2} [\nabla_{\xv_t} \ln p_\theta(\xv_t) - S_\phi(\xv_t)] \nabla_\theta G_\theta(z).
}

\section{Training and Evaluation Details and Additional Results.} \label{sec:detail}
The hyperparameters tailored for our study are outlined in Table \ref{tab:Hyperparameters}, with all remaining settings consistent with those in the EDM code \cite{karras2022elucidating}. The initial development of the SiD algorithm utilized a cluster with 8 Nvidia RTX A5000 GPUs. To support a mini-batch size up to 8192 for ImageNet 64x64, we adopted the gradient accumulation strategy. Extensive evaluations across four diverse datasets were conducted using cloud computation nodes equipped with either 16 Nvidia A100-40GB GPUs, 8 Nvidia V100-16GB GPUs, or 8 Nvidia H100-80GB GPUs, with most experiments performed on Nvidia A100-40GB GPUs.

Comparisons of memory usage and per-iteration computation costs between SiD and Diff-Instruct, utilizing 16 Nvidia A100-40GB GPUs, are detailed in Table \ref{tab:Hyperparameters}.

We note the time and memory costs reported in Table \ref{tab:Hyperparameters} do not include these used to evaluate the Fr\'echet Inception Distance (FID) of the single-step generator during the distillation process.
The FID for the SiD generator, utilizing exponential moving average (ema), was evaluated after processing each batch of 500k generator-synthesized fake images. We preserve the SiD generator that achieves the lowest FID, and to ensure accuracy, we re-evaluate it across 10 independent runs to calculate the corresponding metrics. It's worth noting that some prior studies have reported the best metric obtained across multiple independent random runs, a practice that raises concerns about reliability and reproducibility. We consciously avoid this approach in our work to ensure a more robust and credible evaluation.

\begin{table}[ht]
\caption{Hyperparameter settings and comparison of distillation time and memory usage between Diff-Instruct and SiD on 16 NVIDIA A100 GPUs with 40 GB of memory each.}
\label{tab:Hyperparameters}
\begin{center}
\resizebox{\textwidth}{!}{
\begin{tabular}{cccccc}
\toprule
Method &Hyperparameters & CIFAR-10 32x32 & ImageNet 64x64 & FFHQ 64x64 & AFHQ-v2 64x64 \\
\midrule
&Batch size & 256 & 8192 & 512 & 512 \\
&Batch size per GPU &16 & 16&32&32\\
&\# of GPUs (40G A100)& 16 & 16 & 16 & 16\\
&Gradient accumulation round& 1 & 32 & 1 & 1\\
&Learning rate of ($\psi$, $\theta$) & 1e-5 & 4e-6 & 1e-5 & 5e-6\\
&Loss scaling of ($\psi,\theta$) &\multicolumn{4}{c}{(1,100)}\\
&ema & 0.5 & 2 & 0.5 & 0.5 \\
&fp16 & False & True & True & True \\
&Optimizer Adam (eps) & 1e-8 & 1e-6 & 1e-6 & 1e-6 \\
&Optimizer Adam ($\beta_1$) of $\theta$ &0&0&0.9&0\\
&Optimizer Adam ($\beta_1$) of $\psi$ &\multicolumn{4}{c}{0} \\
&Optimizer Adam ($\beta_2$) &\multicolumn{4}{c}{0.999}\\
&$\alpha$ & \multicolumn{4}{c}{$1.0$ and $1.2$}\\
&$\sigma_{\text{init}}$ & \multicolumn{4}{c}{2.5}\\
&$t_{\max}$ &\multicolumn{4}{c}{800}\\
&augment, dropout, cres  &\multicolumn{4}{c}{The same as in EDM for each corresponding dataset }\\
\midrule
\multirow{3}{*}{Diff-Instruct}&max memory in GB allocated per GPU & 4.4 & 20.4 & 8.1 & 8.1\\
&max memory in GB reserved per GPU & 4.7  & 23.0  &10.8 & 10.8 \\
&$\sim$seconds per 1k images & 1.4 & 2.8& 1.1 & 1.1 \\
\midrule
\multirow{5}{*}{SiD}&max memory in GB allocated per GPU & 7.8  & 31.3 & 17.0 & 17.0 \\
&max memory in GB reserved per GPU & 8.1  & 31.9  &17.2& 17.2 \\
&$\sim$seconds per 1k images & 1.6 & 3.6& 1.3 & 1.3 \\
&$\sim$hours per 10M ($10^4$k) images & 4.4 & 10.0 & 3.6 & 3.6 \\
&$\sim$days per 100M ($10^5$k) images & 1.9 & 4.2& 1.5 & 1.5 \\
&$\sim$days per 500M ($5\times 10^5$k) images & 9.3 & 20.8& 7.5 & 7.5 \\
\bottomrule
\end{tabular}}
\end{center}
\end{table}

\begin{figure}[t]
\centering
{\includegraphics[width=.5\linewidth]{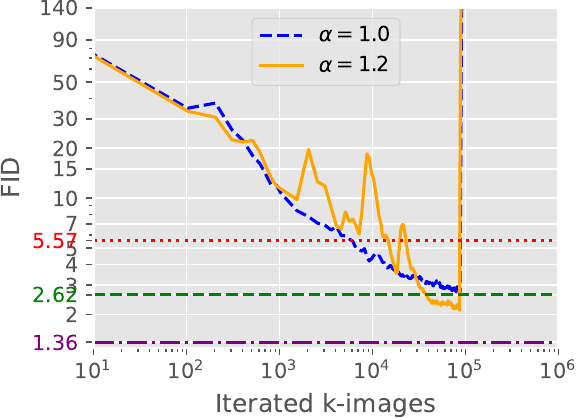}
}
\captionof{figure}{\small Analogous plot to Fig.~\ref{fig:imagenet_convergence_speed} for ImageNet 64x64, where the batch size for SiD is 1024 and learning rate is 5e-6. The FID declines fast until it suddenly diverges. Increasing the batch size to 8192 and lowering the learning rate to 4e-6, as shown in Fig.~\ref{fig:imagenet_convergence_speed},  has alleviated the issue of sudden divergence.
 }
 \label{fig:imagenet_1024}
\end{figure}

\begin{figure*}[h]
\begin{center}
\includegraphics[width=0.137\linewidth]{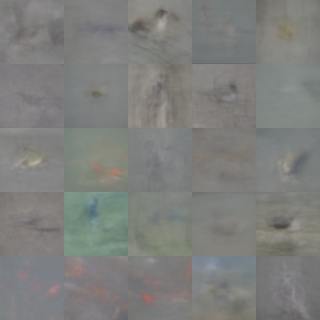}
\includegraphics[width=0.137\linewidth]{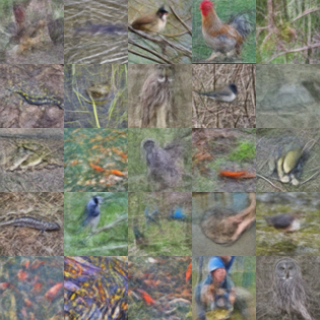}
\includegraphics[width=0.137\linewidth]{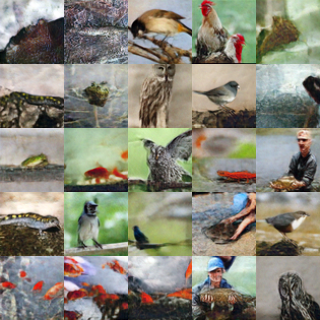}
\includegraphics[width=0.137\linewidth]{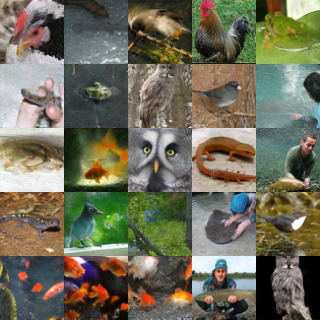}
\includegraphics[width=0.137\linewidth]{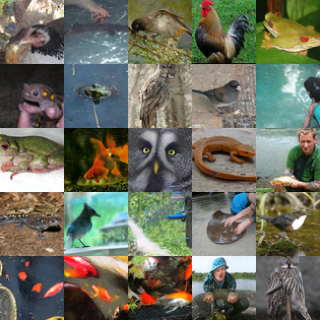}
\includegraphics[width=0.137\linewidth]{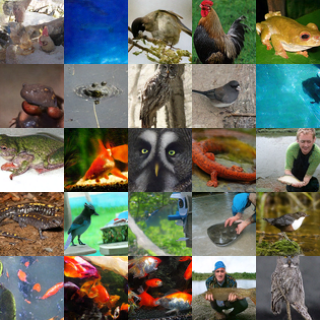}
\includegraphics[width=0.137\linewidth]{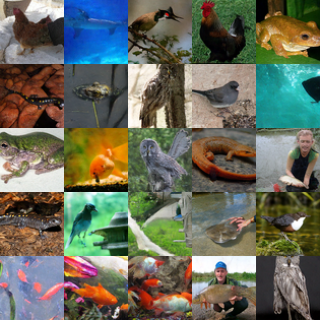}
\end{center}
 \caption{\small
 Similar to Fig.~\ref{fig:imagenet_progress}, this plot showcases the SiD method's efficacy with $\alpha=1.0$, a batch size of 8192, and a learning rate of 4e-6. The images are created using a consistent set of random noises after training the SiD generator with differing numbers of synthesized images, specifically 0, 0.2, 1, 5, 10, 20, and 50 million images. These correspond to approximately 0, 20, 120, 600, 1.2K, 2.5K, and 6.1K training iterations, respectively, displayed sequentially from left to right. The corresponding FIDs at these stages are 153.73, 54.63, 46.07, 11.02, 6.93, 4.68, and 3.34. The progression of FIDs is illustrated by the dashed blue curve in Fig.~\ref{fig:imagenet_convergence_speed}.
 }
 \label{fig:imagenet_progress1}
\end{figure*}

\begin{figure*}[h]
\begin{center}
\includegraphics[width=0.137\linewidth]{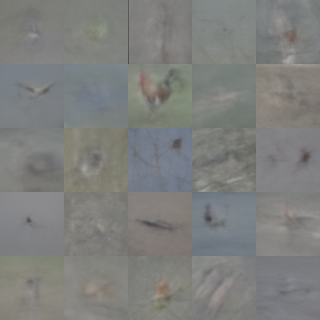}
\includegraphics[width=0.137\linewidth]{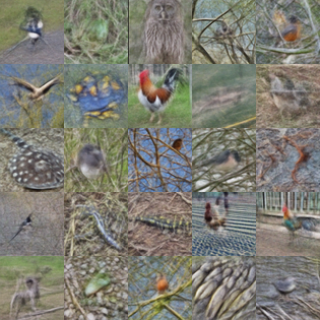}
\includegraphics[width=0.137\linewidth]{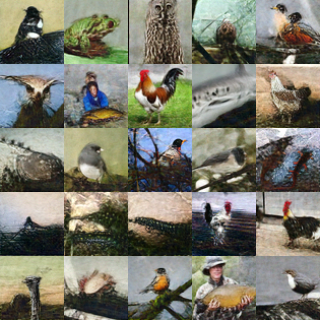}
\includegraphics[width=0.137\linewidth]{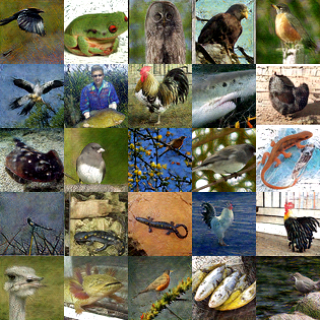}
\includegraphics[width=0.137\linewidth]{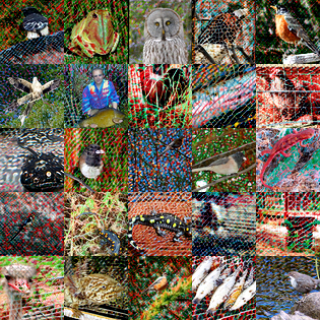}
\includegraphics[width=0.137\linewidth]{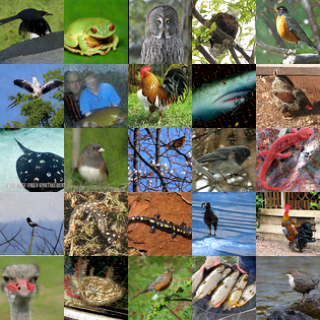}
\includegraphics[width=0.137\linewidth]{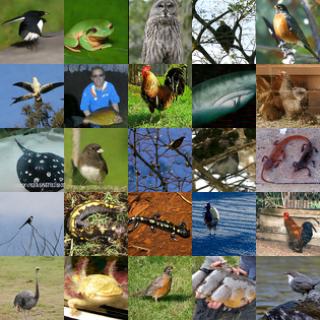}
\end{center}
 \caption{\small
 Analogous plot to Fig.~\ref{fig:imagenet_progress1} for SiD with an adjusted parameter $\alpha=1.2$. The corresponding FIDs are 154.05, 57.63, 43.55, 16.89, 78.92, 7.45, and 3.22.  The progression of FIDs is illustrated by the solid orange curve in Fig.~\ref{fig:imagenet_convergence_speed}.
 }
 \label{fig:imagenet_progress2}
\end{figure*}

\begin{figure*}[h]
\begin{center}
\includegraphics[width=0.137\linewidth]{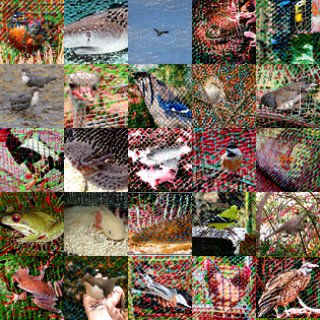}
\includegraphics[width=0.137\linewidth]{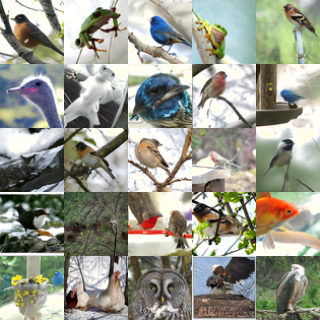}
\includegraphics[width=0.137\linewidth]{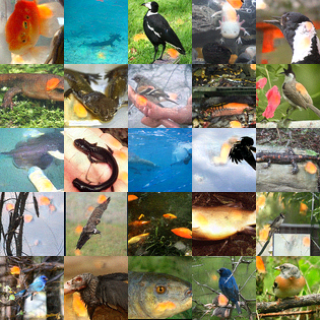}
\includegraphics[width=0.137\linewidth]{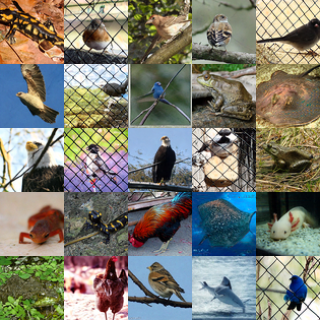}
\includegraphics[width=0.137\linewidth]{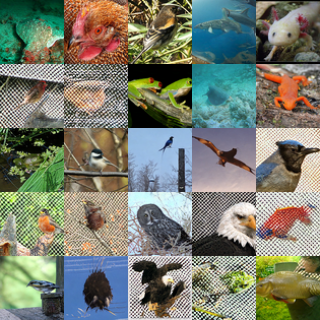}
\includegraphics[width=0.137\linewidth]{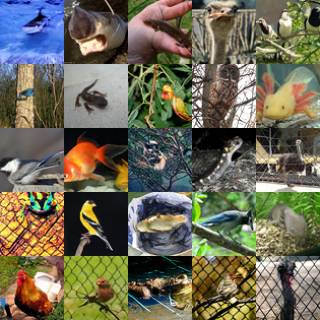}
\includegraphics[width=0.137\linewidth]{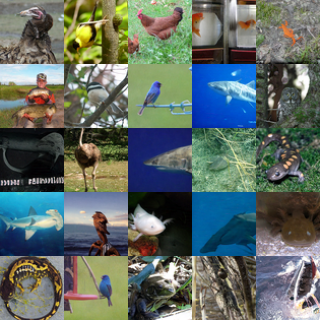}
\end{center}
 \caption{\small
All but the last subplot consist of example SiD generated images corresponding to the spikes of the solid orange curve in Fig.~\ref{fig:imagenet_convergence_speed}, which depicts the evolution of FIDs of SiD with $\alpha=1.2$. These spikes are observed after processing around 10, 55, 17, 23, 73, and 88 million images. The last subplot displays SiD generated images using the generator with the lowest FID.
 }
 \label{fig:spikes}
\end{figure*}

\newpage
\begin{figure*}[h]
 \centering
{\includegraphics[width=1\linewidth]{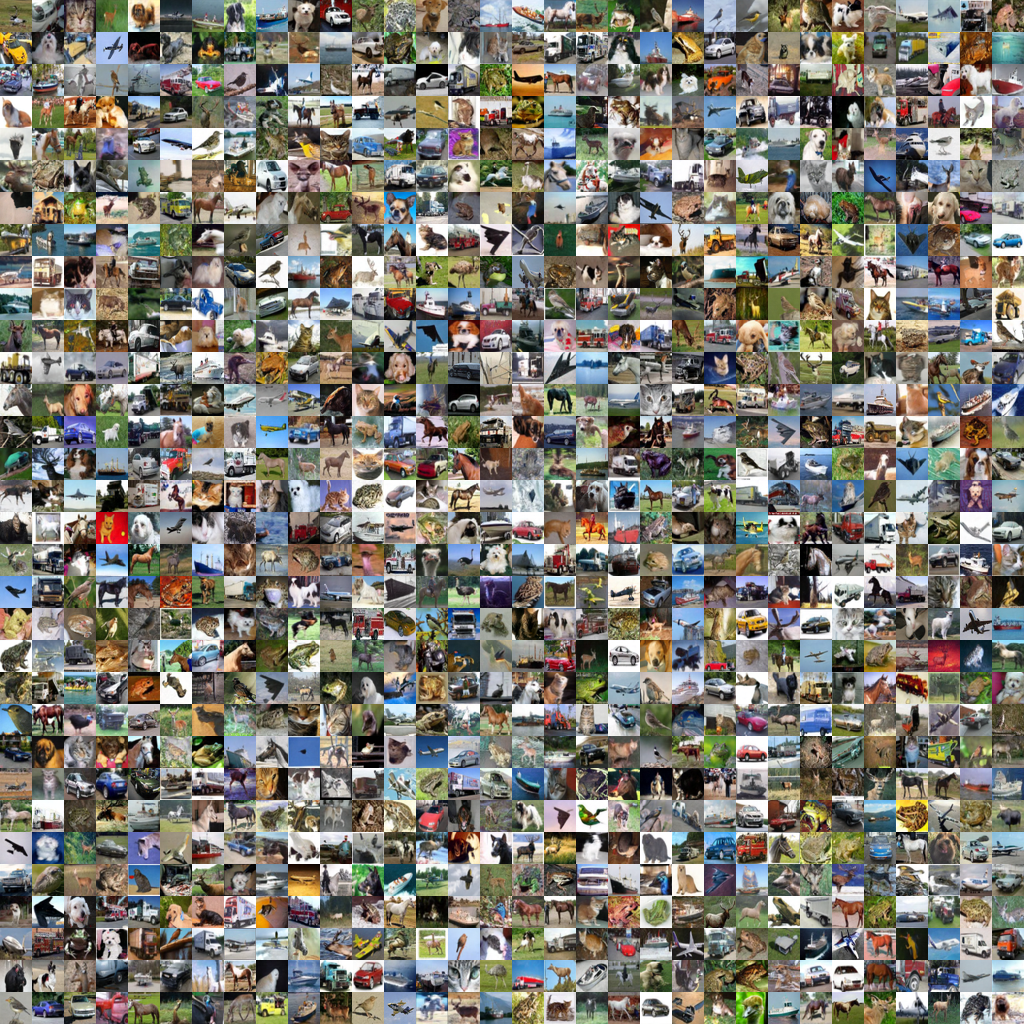}
}
 \caption{
 Uncoditional CIFAR-10 32X32 random images generated with SiD (FID: 1.923).
 }
 \label{fig:cifar10_sample_uncond}
\end{figure*}

\begin{figure*}[!h]
 \centering
{\includegraphics[width=1\linewidth]{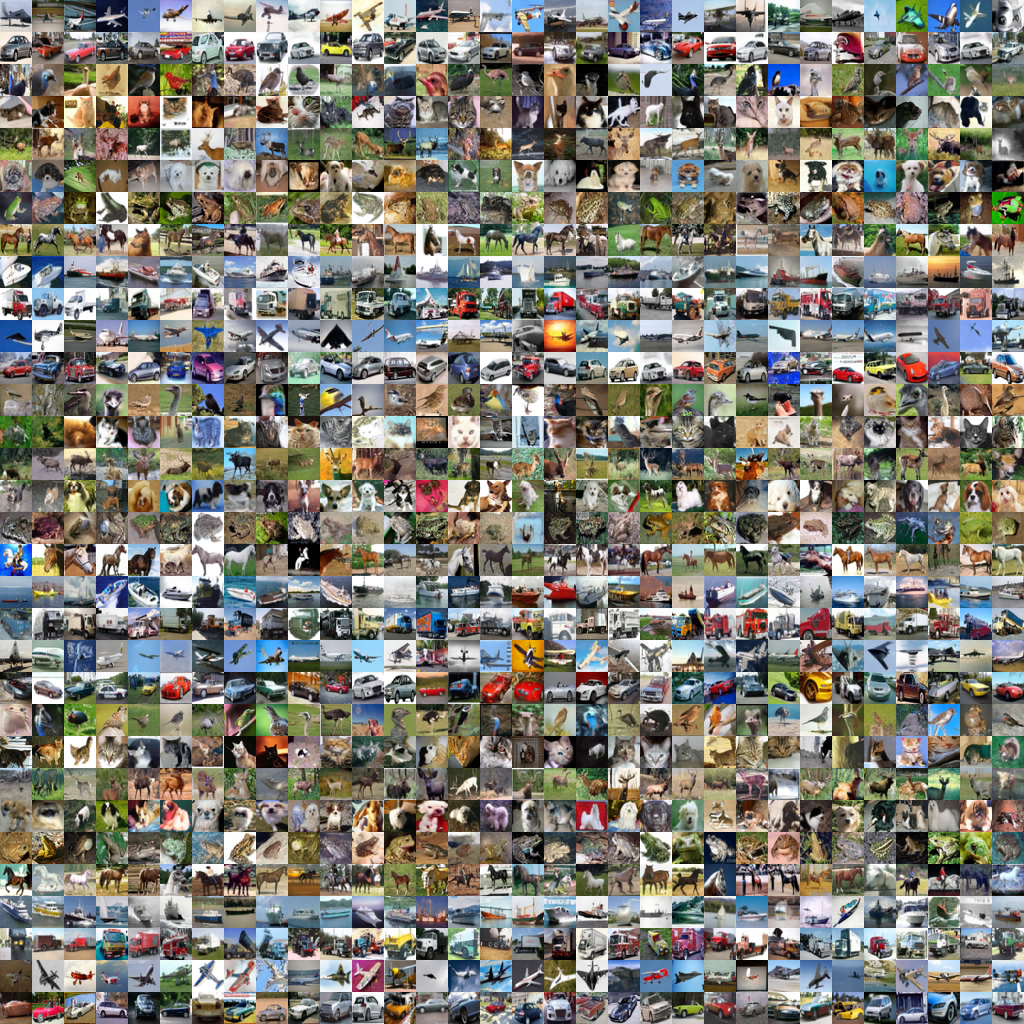}}
 \caption{
 Label conditioning CIFAR-10 32X32 random images generated with SiD (FID: 1.710)
 }
 \label{fig:cifar10_sample_cond}
\end{figure*}
\begin{figure*}[!h]
 \centering
{\includegraphics[width=1\linewidth]{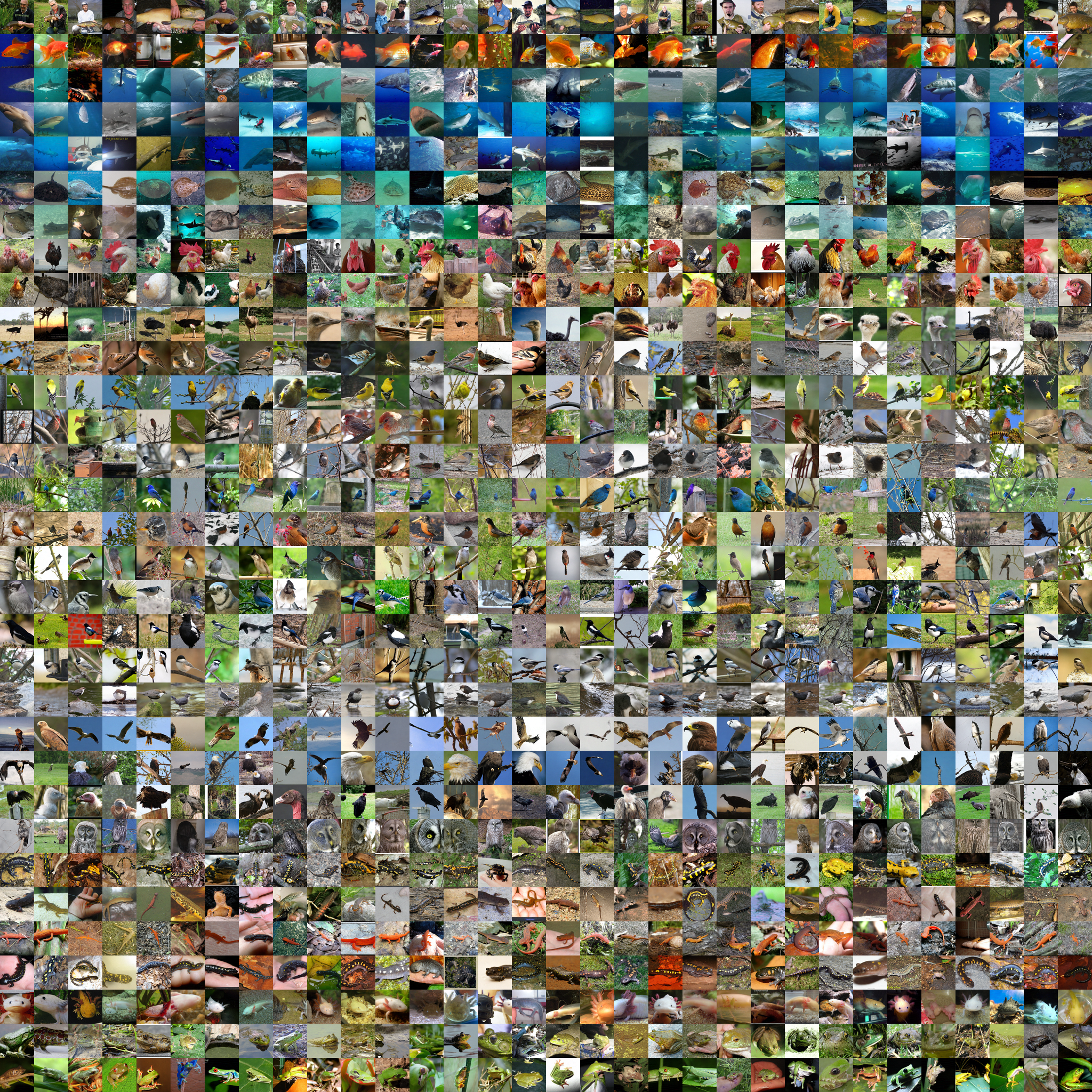}}
 \caption{
 Label conditioning ImageNet 64x64 random images generated with SiD (FID: 1.524)
 }
 \label{fig:imagenet_images}
\end{figure*}

\begin{figure*}[!h]
 \centering
{\includegraphics[width=1\linewidth]{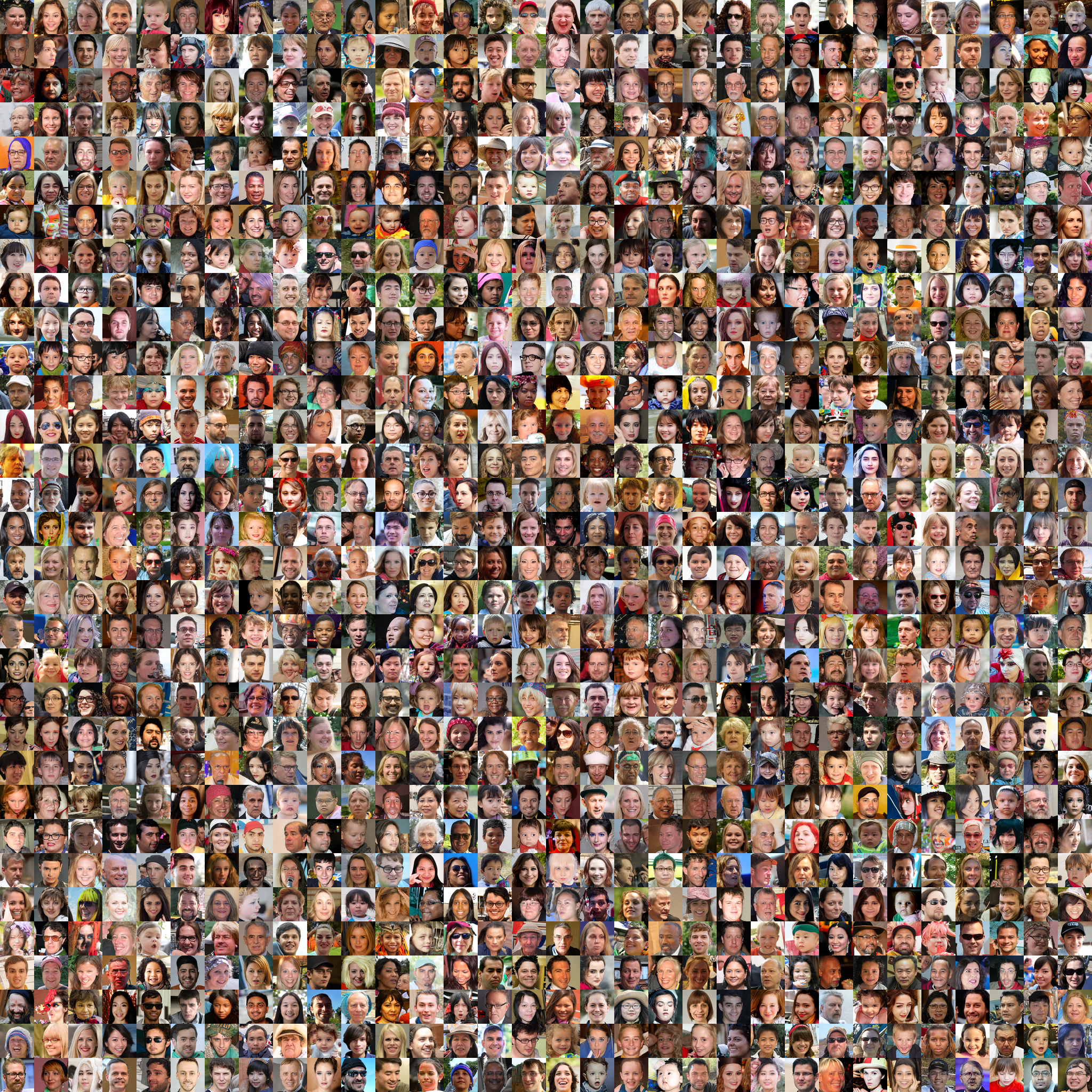}
}
 \caption{
 FFHQ 64X64 random images generated with SiD 
 (FID: 1.550)
 }
 \label{fig:ffhq_images}
\end{figure*}

\begin{figure*}[!h]
 \centering
 {\includegraphics[width=1\linewidth]{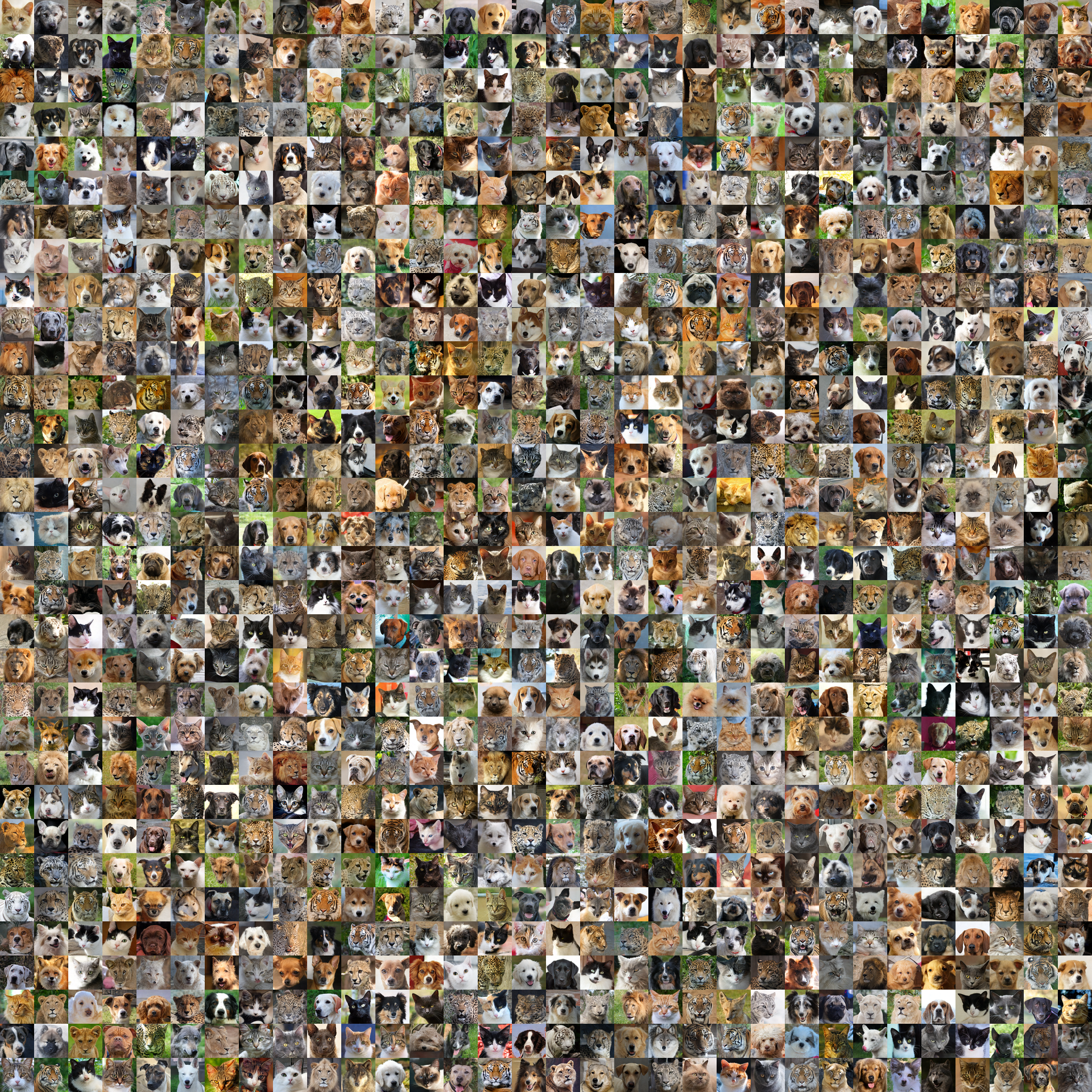}}
 \caption{
 AFHQ-V2 64X64 random images generated with SiD (FID: 1.628)
 }
 \label{fig:afhq_images}
\end{figure*} 
\end{document}